\newtheorem{myDef}{Definition}
\newtheorem{myEx}{Example}
\newtheorem{myLem}{Lemma}
\begin{document}

\title{CHGNN: A  Semi-Supervised Contrastive Hypergraph Learning Network}

\author{Yumeng Song,
        Yu Gu,
        Tianyi Li,~\IEEEmembership{Member,~IEEE,}
        Jianzhong Qi,~\IEEEmembership{Member,~IEEE,}
        Zhenghao Liu,
        Christian S. Jensen,~\IEEEmembership{Fellow,~IEEE}
        and~Ge Yu,~\IEEEmembership{Member,~IEEE}

\thanks{Y. Song,  Y. Gu, Z. Liu, and G. Yu are with the School of Computer Science and Engineering, Northeastern University, Shenyang, Liaoning 110819, China.
E-mail: ymsong94@163.com, \{ guyu, liuzhenghao, yuge\}@mail.neu.edu.cn.}
\thanks{T. Li and C.S. Jensen are with the Department of Computer Science, Aalborg University, 9220 Aalborg, Denmark. E-mail: \{tianyi, csj\}@cs.aau.dk.}
\thanks{J. Qi is with the School of Computing and
Information Systems, The University of Melbourne, Parkville, VIC 3010, Australia. E-mail: jianzhong.qi@unimelb.edu.au. }
\thanks{Corresponding author: Yu Gu.}
}

\markboth{Journal of \LaTeX\ Class Files,~Vol.~14, No.~8, August~2015}%
{Shell \MakeLowercase{\textit{et al.}}: Bare Demo of IEEEtran.cls for Computer Society Journals}

\IEEEtitleabstractindextext{%
\begin{abstract}
Hypergraphs can model higher-order relationships among data objects that are found in applications such as social networks and bioinformatics. However, recent studies on hypergraph learning that extend graph convolutional networks to hypergraphs cannot learn  effectively from features of unlabeled data. To such learning, we propose a contrastive hypergraph neural network, CHGNN, that exploits self-supervised contrastive learning techniques to learn from labeled and unlabeled data. First, CHGNN includes an adaptive hypergraph view generator that adopts an auto-augmentation strategy and learns a perturbed probability distribution of minimal sufficient views. Second, CHGNN encompasses  an improved hypergraph encoder that considers hyperedge homogeneity to fuse information  effectively. Third, CHGNN is equipped with a joint loss function that combines a similarity loss for the view generator, a node classification loss, and a hyperedge  homogeneity loss to inject supervision signals. It also includes basic and cross-validation contrastive losses, associated with an enhanced contrastive loss training process. Experimental results on nine real datasets offer  insight into the effectiveness of CHGNN, showing that 
it outperforms 19 competitors in terms of classification accuracy consistently. 


\end{abstract}

\begin{IEEEkeywords}
Hypergraphs, contrastive learning, semi-supervised learning, graph neural networks
\end{IEEEkeywords}}

\maketitle

\IEEEdisplaynontitleabstractindextext

\IEEEpeerreviewmaketitle

\IEEEraisesectionheading{\section{Introduction}\label{sec:introduction}}
\IEEEPARstart{H}{ypergraphs} generalize  graphs by introducing \textit{hyperedges} to represent higher-order relationships  among groups of nodes.  This representation is highly relevant in many real-world applications, e.g., ones that capture research collaborations. By modeling higher-order relationships,  hypergraphs can provide better performance than graph-based models in many data mining tasks~\cite{feng2019hypergraph,yadati2018hypergcn,new1,new2}. Recently, hypergraphs have been applied in diverse domains, including social networks~\cite{han2022dh}, knowledge graphs~\cite{xia2022hypergraph}, e-commerce~\cite{yuan2022community}, and computer vision~\cite{liao2021hypergraph}. 
These applications illustrate the versatility and potential of hypergraphs as a tool for capturing complex relationships among objects.

{Like graph problems (e.g., node classification and link prediction), where machine learning, especially deep learning, has yielded state-of-the-art solutions, many hypergraph problems~\cite{liao2021hypergraph,sun2021heterogeneous} are being solved by neural network-based models. In this paper, we study a fundamental problem in using neural networks for hypergraphs, which is to learn node representations for downstream tasks. The goal is to preserve the structure and attribute features of nodes in their learned embeddings so that prediction tasks based on these embeddings can be solved with higher accuracy.}

\begin{figure}[t]
\vspace{-7mm}
  \centering  
  \subfloat[HyperGNNs]{\includegraphics[width=3.2cm]{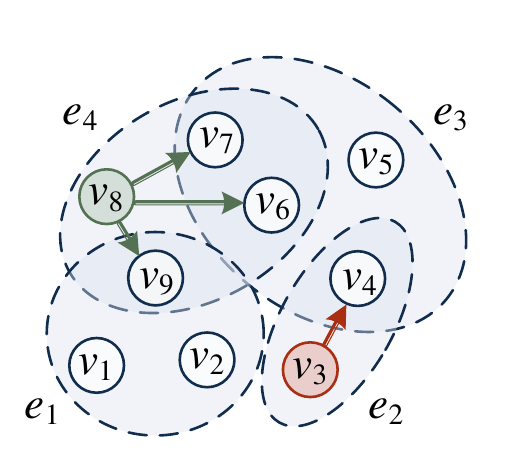}
  \label{HyperGNNs}}
  \quad
  \subfloat[CHGNN]{
    \includegraphics[width=3.2cm]{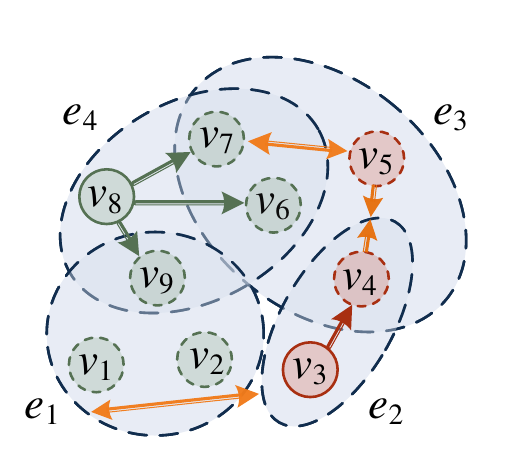}
    \label{CHGNN}} 
  \caption{Comparison of HyperGNN and CHGNN learning, where (i) colored, \textit{solid} circles denote nodes with class labels; 
  (ii) colored, \emph{dashed} circles denote nodes with model-inferred classes; 
  (iii)  uncolored circles denote nodes with model-inferred classes but that are not involved in the loss calculation; 
  (iv) dashed regions denote hyperedges; (v) red and green arrows denote class label information propagation; and (vi) orange arrows denote contrastive information propagation.}
  \label{fignew}
  \vspace{-7mm}
 \end{figure}

Since hypergraphs generalize graphs and graph learning studies~\cite{kipf2016semi,liu2023unsupervised, gao2022clusterea} have demonstrated strong results, a natural approach is to convert a hypergraph into a standard graph and apply \emph{graph neural networks} (GNNs) to learn node representations. 
However, this approach leads to poor outcomes, to be shown in the experiments (see Section~\ref{section:5}). This is because converting a hypergraph into a standard graph requires decomposing higher-order relations into pairwise relations, which causes a substantial information loss. To avoid this issue, recent studies instead extend GNNs to hypergraphs, obtaining so-called \emph{hypergraph neural networks} (HyperGNNs). Such models first aggregate node features and then generate node representations with those of their incident hyperedges~\cite{huang2021unignn,feng2019hypergraph,yadati2018hypergcn}. The aggregation process relies on  hypergraph convolution, which propagates training signals obtained from node labels.

\begin{myEx}
 Fig.~\ref{fignew} (a) shows an example of  HyperGNN learning involving nine nodes $v_i\,(1\leq i \leq 9)$. In a semi-supervised setting, where all nodes are only partially labeled, nodes without labels, i.e., $v_1$, $v_2$, $v_4$, $v_5$, $v_6$, $v_7$, and $v_9$, do not participate in the loss calculation and contribute very little to the learning process. The training loss is calculated based only on the labeled nodes, i.e.,  $v_3$ and $v_8$. Consequently, the representation learning outcome is sub-optimal, to be studied experimentally~(see Section~\ref{section:5}).
\end{myEx}

Motivated by the recent success of \emph{contrastive learning} (CL)~\cite{chen2020simple}, a self-learning technique, we propose to combine contrastive and supervised learning, in order to exploit both labeled and unlabeled data better. However, this task is non-trivial and presents two main challenges.

\noindent \emph{Challenge I: How to generate a good set of hypergraph views for CL? } 
In CL, two views are typically created by augmenting the original input data and  the model learns from both views. However, since the views are derived from the same input, there is a risk that they are very similar, which will reduce performance. Thus, it is crucial to  design the view generation so that the views offer diverse and informative perspectives on the input data.
  Existing graph and hypergraph CL data augmentation strategies \cite{velickovic2019deep,you2020graph,lee2022m}  that drop  edges randomly to form views suffer from  information loss when being applied to hypergraphs. This is because a hyperedge often represents a complex relationship among multiple nodes, meaning that randomly dropping a hyperedge makes it difficult for affected nodes to retain their full relationship features. In addition, these views usually
   contain classification-independent information to varying degrees,
  which distracts from the key information during embedding learning.

\noindent  \emph{Challenge II: How to retain  the homogeneity of hyperedges when  encoding hypergraph views for CL?}
   The homogeneity of a hyperedge captures the relatedness of the nodes that form the hyperedge.  Retaining the homogeneity of hyperedges 
 in encodings is crucial to maintain the dependencies among the nodes, which can benefit downstream tasks such as node classification. Thus, the distances between the embeddings of nodes in a hyperedge with high homogeneity should be small due to the high structural similarity. However, existing hypergraph CL models~\cite{xia2021self,zhang2021double} disregard this property, negatively impacting the quality of their learned embeddings.

To tackle the above challenges, we propose a \underline{c}onstrastive \underline{h}yper\underline{g}raph \underline{n}eural \underline{n}etwork (CHGNN). First, we propose an adaptive hypergraph view generator that utilizes the \emph{InfoMin principle}~\cite{tian2020makes} to generate a good set of views. These views share minimal information necessary for downstream tasks. Second, we introduce a method to encode nodes and hyperedges based on the homogeneity of hyperedges. Hyperedges with strong homogeneity are given higher weights during the aggregation step of nodes. Third, we design joint contrastive losses that include basic and cross-validation contrastive losses to (i) reduce the embedding distance between nodes in the same hyperedge and nodes of the same class, and (ii) verify the distribution embeddings using correlations between nodes and hyperedges, nodes and clusters, and hyperedges and clusters. Finally, to obtain more class-tailored representations, we adaptively adjust the contrastive repulsion on negative samples during training. 
\vspace{-1mm}
\begin{myEx}
\label{ex:chgnn}
Fig.~\ref{fignew}~(b) provides an example of the CHGNN learning process, where we introduce more supervision signals by inferring the classes of unlabeled nodes. For example, we encourage (i)  two nodes of the same class, e.g., $v_4$ and $v_5$, to learn embeddings with small distances in the embedding space; (ii)  nodes of different classes, e.g., $v_5$ and $v_7$, to learn embeddings with large distances; and (iii) different hyperedges $e_i\, (1 \leq i \leq 4)$ to learn embeddings with large distances. 
\end{myEx}
\vspace{-1mm}

Example{~\ref{ex:chgnn}} suggests the advantages of CHGNNs over HyperGNNs. We summarize the contributions as follows.

\begin{itemize}[leftmargin=3mm]

\item  We propose  CHGNN, a novel model for hyperedge representation learning that is based on contrastive and semi-supervised learning. The unique combination of these two strategies enables CHGNNs to exploit both labeled and unlabeled data during learning, enabling better accuracy in prediction tasks. 

\item  To enable contrastive hypergraph learning, we propose an adaptive hypergraph view generator that learns hyperedge-level augmentation policies in a data-driven manner, which provides sufficient variances for contrastive learning.  Moreover, we present a hyperedge homogeneity-aware HyperGNN (H-HyperGNN)  that takes into account the homogeneity of hyperedges when performing encoding.
    
\item  We propose a joint loss function that combines a similarity loss for view generators and supervision losses with two contrastive loss terms: basic and cross-validation contrastive losses for H-HyperGNN. They enable CHGNN to learn from both labeled and unlabeled data. 

\item We design an enhanced contrastive loss training strategy that enables the adaptive adjustment of the temperature parameter of negative samples in the loss function. The strategy gradually decreases the similarity between instances of different classes during training. 
  
\item   We evaluate  CHGNN on \textbf{nine} real datasets, finding that CHGNN consistently outperforms \textbf{nineteen} state-of-the-art proposals, including semi-supervised learning and contrastive learning methods, in terms of classification accuracy. The superior performance across all datasets demonstrates the robustness of CHGNN. 
\end{itemize}

We proceed to review related work in Section~\ref{sec:related work} and to introduce preliminaries in Section~\ref{sec:preliminaries}. Section~\ref{section:4} presents CHGNN, and Section~\ref{section:5} covers the experimental study.  Section~\ref{sec:conclusion} concludes and offers research directions.

\vspace{-3mm}
\section{Related Work}\label{sec:related work}

\subsection{Hypergraph Neural Networks} Hypergraph neural networks represent a promising approach to analyzing hypergraph-structured data. 
The basic idea of hypergraph neural networks is that nodes in the same hyperedge are often similar and hence are likely to share the same label~\cite{zhang2017re}.

HGNN~\cite{feng2019hypergraph},  the first proposal of a  HyperGNN method, performs convolution with a hypergraph Laplacian that is further approximated by truncated Chebyshev polynomials. HyperConv~\cite{bai2021hypergraph} proposes propagation between nodes that exploits higher-order relationships and local clustering structures in these.
HyperGCN~\cite{yadati2018hypergcn} further uses a generalized hypergraph Laplacian with mediators to approximate the input hypergraph.
MPNN-R~\cite{yadati2020neural} treats hyperedges as a new node type and connects a ``hyperedge node'' with all nodes that form the hyperedge. This converts a hypergraph into a graph, to which GNNs apply directly. 
AllSet~\cite{chien2021you} integrates Deep Set and Set Transformer with HyperGNNs for learning two multiset functions that compose the hypergraph neural network layers. 
HyperSAGE~\cite{arya2020hypersage} exploits the structure of hypergraphs by aggregating messages in a two-stage process, which avoids the conversion of hypergraphs into graphs and the resulting information loss.
HNHN~\cite{dong2020hnhn} extends a hypergraph to a star expansion by using two different weight matrices for node- and hyperedge-side message aggregation. 
UniGNN~\cite{huang2021unignn} provides a unified framework that generalizes the message-passing processes for both GNNs and HyperGNNs. 
Studies of HyperGNNs also exist that target tasks other than classification, e.g., hyperedge prediction~\cite{zhang2019hyper}, and that target other types of hypergraphs, e.g., recursive hypergraphs~\cite{yadati2020neural, gao2022hgnn}, which are distinct from CHGNN.

Despite the promising results obtained by the existing methods, their message-passing is limited to nodes within only a few hops. This becomes a challenge when the input graph is sparsely labeled, as the methods may fail to obtain sufficient supervision signals for many unlabeled nodes, making them unable to learn meaningful representations for such nodes. Although heterogeneous GNNs~\cite{wang2019heterogeneous, xue2021multiplex} may appear to be applicable to hypergraph tasks, by converting hypergraphs into bipartite graphs, they are not suited for such tasks.  They focus primarily on extracting semantic information from different meta-paths, which is not relevant for hypergraphs that have only one meta-path. Furthermore, experiments~\cite{chien2021you} show that (i) heterogeneous GNNs perform significantly worse than HyperGNNs; and (ii) heterogeneous GNNs do not scale well even to moderately sized hypergraphs.

While graph edges connect two nodes, edges in hypergraphs can connect multiple nodes, thereby enabling hypergraphs to model higher-order relationships. CHGNN is designed specifically for hypergraphs and may exhibit reduced performance if applied to graphs, as it cannot effectively exploit the pairwise relationships captured by graph edges. In line with this, all experiments in the paper are conducted on hypergraph datasets. (Due to the limited availability of publicly hypergraph datasets, some of the hypergraph datasets we used are constructed from graph datasets following~\cite{yadati2018hypergcn}.  The construction process introduces loss of graph information, as discussed in Section~\ref{datasets}.) As hypergraphs and graphs differ fundamentally in what they aim to model, studies of baseline methods on graph versus hypergraph datasets yield markedly different performance findings.   {\color{black} This difference can be also observed in~\cite{lee2022m}.}

\vspace{-3mm}
\subsection{Contrastive Learning} 

\subsubsection{Non-Graph Contrastive Learning}
\textit{Contrastive learning} (CL)~\cite{chen2020simple,Moco} encompasses self-supervised methods that learn representations by minimizing the distances between embeddings of similar samples and by maximizing the distances between those of dissimilar samples.
{\color{black}SimCLR~\cite{chen2020simple} introduces a novel data augmentation strategy and emphasizes the importance of diverse positive pairs. MoCo~\cite{Moco} features a momentum encoder and an efficient dictionary queue for negative samples. SimCLRv2~\cite{simclr2} and Suave~\cite{Suave} integrate contrastive learning into semi-supervised learning. 
SimCLRv2 learns embeddings through unsupervised pretraining, supervised fine-tuning, and distillation learning. Suave, the state-of-the-art semi-supervised contrastive learning model, uses class prototypes instead of cluster centroids and optimizes both supervised and self-supervised losses.

CHGNN differs from these four methods in three respects. First, CHGNN encodes and trains using both the structures and features of hypergraphs, while the four methods focus only on the features of instances. Second, the augmentation strategies of the four methods (e.g., cropping, resizing, and rotation) do not extend to hypergraphs, whereas CHGNN's strategy is tailored for hyperedges. Third,  CHGNN introduces (i) a basic contrastive loss at the  cluster and hyperedge levels and (ii) a cross-validation loss involving node-cluster, node-hyperedge, and hyperedge-cluster contrastive losses, rather than using a single contrastive loss.}

\vspace{-1mm}
\subsubsection{Graph Contrastive Learning}

Graph contrastive learning (GCL) applies  CL to GNNs.    
GCL models exist that utilize a node-level contrastive loss between nodes and the entire graph~\cite{velickovic2019deep}, high-level hidden representations~\cite{peng2020graph},  or subgraphs~\cite{jiao2020sub} without an augmentation.
Other GCL models generate contrastive samples, known as graph views, through data augmentation techniques, e.g., edge perturbation~\cite{you2020graph,zhu2021graph,li2022graph, HomoGCL}, subgraph sampling~\cite{you2020graph,hassani2020contrastive}, feature masking~\cite{you2020graph,zhu2021graph,li2022graph, HomoGCL}, learnable view generators~\cite{yin2022autogcl}, model parameter perturbation~\cite{xia2022simgrace}, and neural architecture perturbation~\cite{MAGCL}.

{\color{black}HomoGCL~\cite{HomoGCL} and MA-GCL~\cite{MAGCL} are state-of-the-art GCL models. HomoGCL improves the contrasting phase in GCL. It distinguishes intra-class neighbors from inter-class neighbors and assigns more weight to positive neighbors using a Gaussian mixture model based on graph homophily. CHGNN differs from HomoGCL in two main respects. First, CHGNN, a semi-supervised method, identifies positive nodes through training labels, while HomoGCL operates in a self-supervised manner without labels. Second, in HomoGCL, the homophily of hypergraph is the strength of dependency between multiple nodes in a hyperedge, not between pairs of nodes in a graph. In contrast, CHGNN considers the aggregation weights of hyperedges according to homophily. 
MA-GCL manipulates view encoder architectures instead of perturbing graphs for dual view encoding. It and CHGNN differ in two respects. First, CHGNN focuses on generating better views by altering the hypergraph structure and preserving its original information, whereas MA-GCL emphasizes the augmentation phase of GCL.  Second, CHGNN introduces noise through a view generator and retains key information via node importance, while MA-GCL mitigates high-frequency noise.}

\vspace{-2mm}
\subsubsection{Hypergraph Contrastive Learning}
The application of contrastive learning to hypergraphs is still evolving, with TriCL~\cite{lee2022m} being the state-of-the-art and representative model. TriCL introduces a tri-directional contrast mechanism that combines contrasting node, group, and membership labels to learn node embeddings. TriCL outperforms not only unsupervised baselines but also models trained with labeled supervision.
Xia et al.~\cite{xia2021self} improve session-based recommendations by maximizing mutual information
between representations learned. Zhang et al.~\cite{zhang2021double} address data sparsity in group recommendations by using the contrast between coarse-grained and fine-grained hypergraphs. 
HCCF~\cite{xia2022hypergraph} learns user representations by aligning explicit user-item interactions with implicit hypergraph-based dependencies for collaborative filtering. These models~\cite{xia2021self,zhang2021double,xia2022hypergraph} (i) only learn embeddings on heterogeneous social hypergraphs for non-classification tasks; and (ii) they capture differences between individual nodes rather than between nodes of different classes as a whole, as they target recommendation. As a result, they are fundamentally different from  CHGNNs.

\begin{figure*}[t]
\centering
\includegraphics[width=\textwidth]{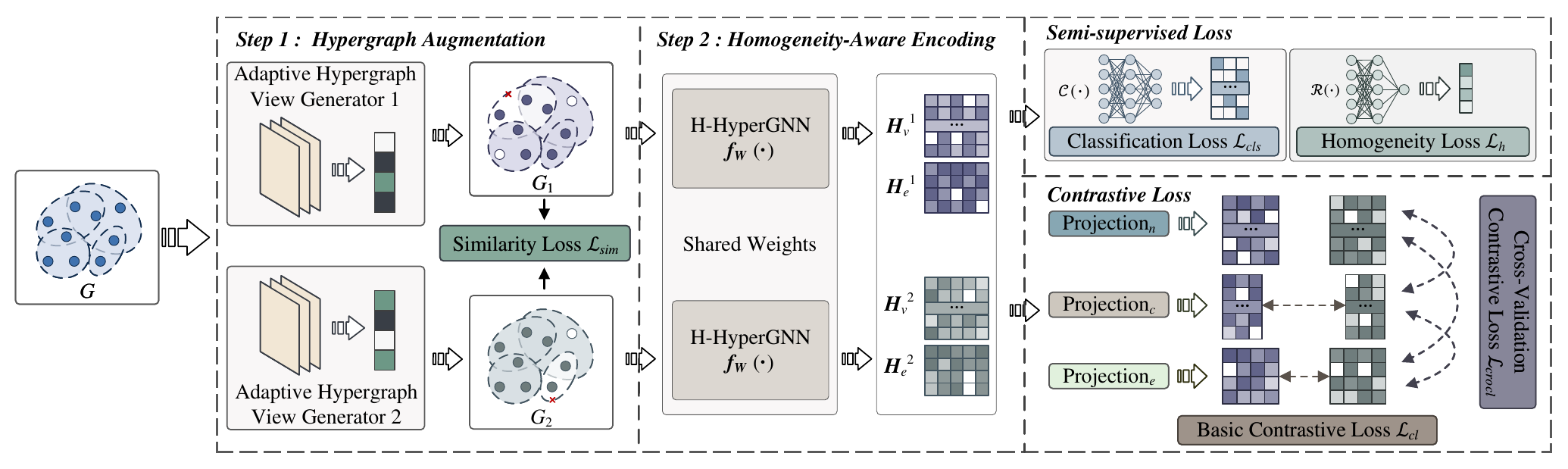}
\vspace{-3mm}
\caption{\color{black}CHGNN model overview (best viewed in color), where $\mathcal{C}(\cdot)$ is the classifier and $\mathcal{R}(\cdot)$ is the regressor. The input is a hypergraph $G$. The output is node embedding matrices $H_v^i$ and hyperedge embedding matrices $H_e^i$ of augmented hypergraphs $G_i\,(i=1,2)$. Step~1 augments the hypergraph using adaptive hypergraph view generators. Step 2 iteratively embeds views with H-HyperGNNs and trains the model using the two proposed semi-supervised and two contrastive losses.}
\label{framework}
\vspace{-5mm}
\end{figure*}
\vspace{-2mm}
\section{Preliminaries}\label{sec:preliminaries}

\begin{myDef}
A \textbf{hypergraph} is denoted as $G=(V,E)$, where $V=\{v_1,v_2, \ldots, v_n\}$ is a node set and $E\subseteq 2^V$ is a hyperedge set. Each hyperedge $e\in E$ is a non-empty subset of $V$.
\end{myDef}

\begin{myDef}
Given a hypergraph $G=(V,E)$, the \textbf{egonet} $E_{\{v_i\}}$ of a node $v_i\in V$ is the set of hyperedges that contain $v_i$.
\end{myDef}

Clearly, the higher $E_{\{v_i\}}$, the higher the connectivity of $v_i$ in the hypergraph. 
Fig.~\ref{fignew} (a) shows a hypergraph $G=(V,E)$, where $V=\{v_1, v_2, v_3, v_4, v_5, v_6, v_7,v_8,v_9\}$, $E=\{e_1, e_2, e_3, e_4\}$. $e_1 = \{v_1, v_2, v_9\}$, $e_2=\{v_3, v_4\}$, $e_3=\{v_4,v_5,v_6,v_7\}$, and $e_4=\{v_6,v_7,v_8,v_9\}$. Further,  $E_{\{v_9\}}=\{e_1, e_4\}$ due to $v_9 \in e_1 \cap  e_4$.

\begin{myDef}
The \textbf{feature matrix} of a hypergraph $G=(V,E)$ is denoted as $\bm{X} \in  \mathbb{R} ^{|V|\times F}$, where each row $\bm{x}_i$ of  $\bm{X}$ is an $F$-dimensional feature vector of a  node $v_i$.   
\end{myDef}

\begin{myDef}
The \textbf{class label vector} of a hypergraph $G=(V,E)$ is denoted as $\bm{y} \in \mathbb{N}^{|V|}$, where each element $y_i$ of $\bm{y}$  represents the class label of  $v_i$ and $\mathbb{N}$ is 
a set of natural numbers. 

\end{myDef}

\begin{myDef}
 The \textbf{homogeneity}~\cite{lee2021hyperedges} of a hyperedge $e\in E$, denoted  $homo(e)$ is given by:
 \vspace{-1mm}

\begin{equation}
	homo(e) := \begin{cases}
		\sigma (\frac{\sum _{\{v_i,v_j\} \in \binom{e}{2}} |E_{\{v_i\}} \cap E_{\{v_j\}}|}{|\binom{e}{2}| } )  &\text{if } |e|>1\\
    1  &\text{otherwise},
		   \end{cases}
  \label{eq1}
  \vspace{-1mm}
\end{equation}
where $\binom{e}{2}$ is the set of node pairs in $e$, $|\binom{e}{2}|$ is the cardinality of $\binom{e}{2}$, $|E_{\{v_i\}} \cap E_{\{v_j\}}|$ is the degree of pair $\{v_i,v_j\}$, and $\sigma(\cdot)$ is the sigmoid function.

\end{myDef} 

Homogeneity measures the strength of the dependencies between the nodes in a hyperedge. A high homogeneity of a hyperedge $e$ means that pairs of nodes in $e$ appear frequently in the edge set $E$. This indicates a strong dependency and a high likelihood of similar attributes (e.g., labels) among all node pairs.

\begin{myDef}
 Give a hypergraph $G = (V, E)$,  the feature matrix $\bm{X}$ of $G$ and the labeled  set $V_L$ ($|V_L| \ll |V|$) with the label set $Y_L =\{y_i | v_i \in V_L\}$, \underline{c}ontrastive \underline{h}yper\underline{g}raph \underline{n}eural \underline{n}etwork (CHGNN)  performs transductive graph-based semi-supervised learning for \textbf{semi-supervised node classifiction}, where the labeled  set $Y_{un} =\{y_j | v_j \in V-V_L\}$ is identified.
\end{myDef}

\vspace{-5mm}
\section{CHGNN}
\label{section:4}

\subsection{Model Overview}

CHGNN follows the common two-branch framework of GCL models~\cite{you2020graph} and encompasses two steps: hypergraph augmentation and homogeneity-aware encoding, as shown in Fig.~\ref{framework}.

\noindent \textit{\textbf{Hypergraph augemtation.}} The input hypergraph $G = (V, E)$ is augmented to obtain $G_1 =(V_1, E_1)$ and $G_2 = (V_2, E_2)$ (cf. Fig.~\ref{framework}) using two adaptive hypergraph view generators.   
We detail the hypergraph augmentation in Section~\ref{section:4.2}.

\noindent \textit{\textbf{Homogeneity-aware encoding.}} The two augmented hypergraph views, $G_1$ and $G_2$, are fed separately into two hyperedge homogeneity-aware HyperGNNs {(H-HyperGNNs)}  to encode the nodes and hyperedges into latent spaces, 
described in Section~\ref{section:4.3}.

\noindent \textit{\textbf{Loss function.}}  We propose a loss function that encompasses two semi-supervised and two contrastive loss terms.
The semi-supervised loss utilizes label and hyperedge homogeneity information to generate supervision signals that propagate to unlabeled nodes during training. 
The basic contrastive loss forces embeddings of the same cluster (and hyperedge) computed from $G_1$ and $G_2$ to be similar, while embeddings of different clusters (and hyperedges) are expected to be dissimilar.  
The cross-validation contrastive loss maximizes the embedding agreement between the nodes, the clusters, and the hyperedges. 
We detail the loss function in Section~\ref{section:4.4}.

\noindent \textit{\textbf{Enhance training.}}
We apply increased repulsion to the negative samples that are difficult to distinguish by adjusting the temperature parameters so that nodes can obtain more class-tailored representations with limited negative samples.
 The enhanced training process is covered in Section~\ref{section:4.5}.

\vspace{-3mm}
\subsection{Hypergraph Augmentation}
\label{section:4.2}
Hypergraph augmentation is  essential for creating hypergraph views that preserve the topological structure of the hypergraph and maintain the node labels.
 We propose augmentation schemes that maintain significant topological and semantic information of hypergraphs for classification, thereby improving CL performance.

\vspace{-2mm}
\subsubsection{Adaptive Hypergraph View Generator}
Given a hypergraph $G$, we generate two hypergraph views $G_1$ and $G_2$ by perturbing $G$ with three operations: preserve hyperedges (P), remove hyperedges (R), and mask some nodes in the hyperedges (M). Note that graph augmentation typically includes removing nodes and masking node features in addition to the above operations. However, perturbation of nodes does not apply to hypergraphs due to their hyperedges that represent higher-order relationships rather than the pairwise relationships. 
Directly perturbing a node interferes with the aggregation of $d_v~\times~d_e$ neighbor nodes, while perturbing a hyperedge only affects $d_e$,  where $d_v$ and $d_e$ is the average node and hyperedge degree in a hypergraph, respectively. To preserve the original topological and semantic information and retain greater variance, we employ finer-grained hyperedge perturbation instead of node perturbation.

We use the HyperConv layers~\cite{bai2021hypergraph} to obtain the hyperedge embedding matrix {$\bm{M}_E$} by iteratively aggregating nodes and hyperedges. Each hyperedge embedding represents the probabilities of preserving, removing, and masking operations. We employ the Gumbel-softmax~\cite{jang2016categorical} to sample from these probabilities, and then the output {$\bm{v}_{aug}$} assigns an augmentation operation  to each hyperedge. We mask a node $v_i$ based on its \emph{importance} during the masking operation. 

\vspace{-3mm}
\subsubsection{Overlappness-based Masking}

To maintain the semantics (i.e., class)  of node labels during hypergraph augmentation, two masking 
 principles should be followed. First, a node is considered more important if its egonet contains more distinct nodes. 
  Second, if the hyperedges contained in the egonet of a node overlap less with the hyperedges contained in the egonets of other nodes, the node is considered more important.
 We provide an example to illustrate the two principles.

\vspace{-1mm}
\begin{myEx}
In Fig.~\ref{fignew} (a),   $E_{\{v_9\}}=\{e_1, e_4\}$ contains 6 nodes: $v_1$, $v_2$, $v_6$, $v_7$, $v_8$, $v_9$, while $E_{\{v_4\}}=\{e_2, e_3\}$ contains 5  nodes: $v_j$ ($3 \leq j \leq 7$). In this case, masking node $v_9$ results in higher information loss than masking node $v_4$. Second, the number of nodes contained in $E_{\{v_6\}}=\{e_3, e_4\}$ is 6, which is the same as for $E_{\{v_9\}}$. However, $v_6$ is less important than $v_9$ because (i) $E_{\{v_6\}}=E_{\{v_{7}\}}$, meaning that $e_3$ and $e_4$ can still be connected via  $v_{7}$ even if $v_6$ is masked; and (ii) $E_{\{v_9\}}\cap E_{\{v_1\}}=\{e_1\}\neq E_{\{v_9\}}$, and $E_{\{v_9\}}\cap E_{\{v_8\}}=\{e_4\}\neq E_{\{v_9\}}$,  meaning that $e_1$ and $e_4$ cannot be connected if $v_9$ is masked.
\end{myEx}

\vspace{-1mm}
We introduce the  \textit{overlappness}~\cite{lee2021hyperedges} of an egonet to evaluate the significance of a node $v_i$ in line with the above two principles.

\begin{myDef}
The \textbf{overlappness} of the egonet of $v_i$, denoted by $o (v_i)$, is defined as:
\vspace{-2mm}
\begin{equation}
  o(v_i):= \frac{\sum_{e\in E_{\{v_i\}}}|e|}{|\bigcup _{e\in E_{\{v_i\}}}e|}
\label{eq3} 
\end{equation}
\vspace{-5mm}
\end{myDef}

\noindent When the number of distinct nodes in the egonet of $v_i$ is large, or the hyperedges in the egonet of $v_i$ overlap less, the value of $o (v_i)$ is small, i.e., the node is more important (cf. Formula~\ref{eq3}). To normalize  overlapness values, which may vary considerably, we use $w_{v_i} =\log o (v_i)$. Next, we apply it to compute the  probability of masking $v_i$.
\vspace{-3mm}
\begin{equation}
  p_{v_i}= \min \{ \frac{w_{nmax} - w_{v_i}}{w_{nmax}-w _{navg} } \cdot p_{node}, p_\tau \},
\label{eq4}
\vspace{-2mm}
\end{equation}

\noindent where $p_{node}$ is a hyperparameter that determines the overall probability of masking any node (i.e., the sparsity of the resulting views), $w_{nmax}$ and $w _{navg}$ are the maximum and  average of the normalized overlappness in $G$, and $p_{\tau }$ is a cut-off probability. 

\vspace{-2mm}
\subsubsection{View Generator Training}
A good set of views should share only the minimal information needed for high performance on a classification task. In line with this,
we train a view generator in an end-to-end manner to generate task-dependent views, allowing the classification loss to guide the generator to retain classification information.
A similarity loss is used to minimize the mutual information between the views generated by following~\cite{tian2020makes}. The similarity between the views is measured by comparing the outputs of the view generators:
\vspace{-1mm}
\begin{equation}
  \mathcal{L}_{sim}= sim(\bm{v}_{aug}^1, \bm{v}_{aug}^2),
\label{eq5}
\vspace{-1mm}
\end{equation}
where $\bm{v}_{aug}^1$ and $\bm{v}_{aug}^2$ are the augmentation vectors for generators $G_1$ and $G_2$ in Fig.~\ref{framework}, respectively, and $sim(\cdot)$ is the similarity loss. In this paper, we adopt the mean squared error loss.

\vspace{-3mm}
\subsection{Hyperedge Homogeneity-aware HyperGNN}
\label{section:4.3}

The aggregation process above treats all hyperedges containing a node $v_i$ equally and ignores their semantic differences.

\vspace{-1mm}
\begin{myEx}
  Consider a publication domain classification problem as shown in Fig.~\ref{fig3}. 
    When updating the embedding of $v_i$ for subsequent classification, existing HyperGNNs treat $e_1$, $e_2$, and $e_3$ equally  (see Fig.~\ref{fig3} (a)). However,  $e_2$ should be given  lower weight, as it contains nodes from distinct domains and has more diverse semantics (see Fig.~\ref{fig3} (b)). Specifically, when an author publishes in diverse domains, there is a higher uncertainty regarding which domain a paper   by the author belongs to.
\end{myEx}
\vspace{-1mm}

\begin{figure}
\centering
\subfloat[Equal aggregation ]{
\includegraphics[width=3cm]{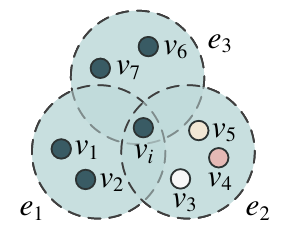}}  
\quad\quad\quad
\subfloat[Ideal aggregation]{ 
\includegraphics[width=3cm]{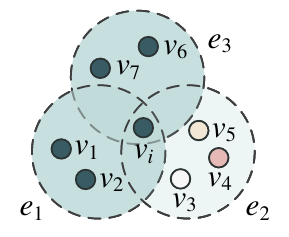}} 
\caption{\color{black}Aggregation considering nodes homogeneity. (i) Each dashed circle represents a hyperedge, and each node in a circle represents an individual paper. (ii) Nodes in the same circle, or hyperedge,  share an author. (iii) The hyperedge coloring represents distinct aggregation weights in CHGNN: darker colors indicate higher weights, i.e., $e_1$ and $e_3$ have higher aggregation weights than $e_2$ in (b). (iv) Nodes are colored to denote different domains, i.e., $v_2$, $v_3$, $v_4$, and $v_5$ belong to separate domains, unlike $v_1$, $v_2$, $v_6$, $v_7$, and $v_i$.}
\label{fig3}
\vspace{-6mm}
\end{figure}

The above example suggests that it is crucial to take into account the semantic information of hyperedges. Motivated by this, we propose an H-HyperGNN that weighs the hyperedges by their homogeneity of semantic information. We propagate the features as follows:
\begin{equation}
  \begin{split}
  \bm{h}_{e,e_i}^{(l)} & = \frac{1}{|e_i|} \sum _{v_j\in e_i} \bm{h}_{v,v_j}^{(l-1)}  \\
  \bm{h}_{v,v_i}^{(l)} & = \bm{W}(\bm{h}^{(l-1)}_{v,v_i} + \sum_{e_j\in E_{\{v_i\}}} homo(e_j)\cdot \bm{h}_{e,e_j}^{(l)}),
  \end{split}
 \label{eq6}
\end{equation}
where $\bm{W}$ is a learnable parameter matrix and $homo(e)$ denotes the homogeneity of a hyperedge $e\in E$, which is calculated following Formula~\ref{eq1}. Next,  $\bm{h}_{e,e_i}^{(l)}$ and $\bm{h}_{v,v_i}^{(l)}$  denote the embeddings of  $e_i$ and $v_i$ at the $l$-th  iteration, where $\bm{h}_{v,v_i}^{(0)} = \bm{x}_i$ and $\bm{x}_i$ is the input feature vector of $v_i$. We use a two-layer H-HyperGNN in our experiments (see Section~\ref{section:5}). The final layer output of the H-HyperGNN forms the learned embeddings of the nodes and hyperedges, which are used  for loss computation  in the next stage.

\vspace{-5mm}
\subsection{Model Training}
\label{section:4.4}

\subsubsection{Semi-Supervised Training Loss}
\label{section:4.4.1}
\noindent\textit{\textbf{Classification  loss.}} 
The final embedding of each labeled node $v_i$ is computed by taking the average of its two embedding vectors $\bm{h}_{v,v_i}^1$ and $\bm{h}_{v,v_i}^2$  obtained from $\bm{H}_{v}^1$ and $\bm{H}_{v}^2$ generated from $G_1$ and $G_2$  (see Fig.~\ref{framework}). The final embedding is then input into
a classifier network (we use a two-layer MLP for simplicity) $\mathcal{C}(\cdot)$, where the cross-entropy loss is used as the classification loss.
\vspace{-2mm}
\begin{equation}
  \mathcal{L}_{cls}=-\sum_{v_i\in V_L} y_i\ln \mathcal{C}(\frac{\bm{h}_{v,v_i}^1+\bm{h}_{v,v_i}^2}{2}),
  \label{eq8}
  \vspace{-1mm}
\end{equation}
where  $V_L$ is the set of labeled nodes and $y_i$ is the label of node $v_i$.

\noindent\textit{\textbf{Hyperedge homogeneity loss.}}
{\color{black}We denote the hyperedge embedding of view $G_i$  by $\bm{H}^i_e$,  $i\in \{1,2\}$.  We feed $\bm{H}_{e}^{i}$ into a two-layer MLP that serves as a regressor $\mathcal{R}(\cdot)$ to predict the homogeneity of all hyperedges based on their embeddings and then compute the homogeneity loss (see Fig.~\ref{framework}). } Since hyperedge embeddings are generated by node aggregation, homogeneity can be viewed as a measure of node similarity within a hyperedge. {The homogeneity loss aims to decrease the similarity between node embeddings in heterogeneous hyperedges and increase the similarity in homogeneous hyperedges. To achieve this, we compute the mean squared error between the true and predicted homogeneity values as the hyperedge homogeneity loss.}
 
\vspace{-1.5mm}
\begin{equation}
  \mathcal{L}_{h}= \lambda _{h} \cdot (\frac{1}{2} \sum_{i \in \{1,2\}}\mathit{MSE}(\mathcal{R}(\bm{H}_{e}^i), \bm{v}_{homo})),
  \label{eq9}
  \vspace{-2mm}
\end{equation}
 where $\lambda _{h}>0$ is a hyperparameter that weighs the importance of hyperedge homogeneity loss,  $\bm{H}_{e}^i$ is the hyperedge embedding matrix generated from view $G_i$, and $\bm{v}_{homo}$ is the homogeneity vector of $E$. The $j$-th element of $\bm{v}_{homo}$  is denoted as $homo(e_j)$.

\vspace{-3mm}
\subsubsection{Contrastive Loss}
\label{section:4.4.2}

We define the output matrix in CL.

\begin{myDef}
$\bm{Z}_b^a$ is an \textbf{output matrix} of a projection head, where $a\,(\in \{1,2\})$ denotes views and $b\, (\in \{e, c,n \})$ denotes the type of projection. Further $c$ represents  cluster projection, $e$ represents  hyperedge projection, and $n$ represents  node projection. $\bm{Z}_{b:i}^a$ denotes the $i$-th column of $\bm{Z}_b^a$, and $\bm{z}_{b,w}^a$ denotes the vector of $w$ in $\bm{Z}_b^a$.
\end{myDef}
\vspace{-1mm}

\noindent \textit{\textbf{Basic contrastive loss.}} Basic contrastive loss is similar to GCL~\cite{you2020graph, hassani2020contrastive, zhu2021graph, wan2021contrastive, yin2022autogcl}  that contrasts the same data from different views.
Two contrastive loss notions are used commonly: node-level contrastive loss~\cite{you2020graph, zhu2021graph,hassani2020contrastive} and cluster-level contrastive loss~\cite{li2021contrastive}.
The former aims to distribute nodes uniformly in the hyperplane, while the latter focuses on clustering nodes with similar categories and separating nodes with different categories.
Clearly, the latter aligns with the objective of classification, while the former does not. Thus, we include cluster-level contrastive loss in the basic contrastive loss but exclude node-level contrastive loss.
Inspired by InfoNCE loss~\cite{oord2018representation}, we propose a hyperedge-level contrastive loss to enhance the hyperedge embeddings.  We thus incorporate also hyperedge-level contrastive loss in the basic contrastive loss.

\noindent (i) \emph{Cluster-level contrastive loss}. 
This loss  compares the cluster membership probability distributions learned from different views for each node. To get cluster membership probability distributions, $\bm{H}_v^1$ and $\bm{H}_v^1$ are loaded into a cluster projection to obtain cluster matrices $\bm{Z}_c^1$ and $\bm{Z}_c^2$. We treat the distributions of the same cluster as positive samples.
Conversely, the cluster membership probability distributions of different nodes learned from different views are treated as negative samples,
to differentiate between node clusters rather than nodes. 
Ideally, each node cluster should correspond to nodes of the same class. To achieve this, we project node embeddings into a space with  dimensionality equal to the number of data classes (i.e., the target number of clusters). In particular, the projection is carried out using a two-layer MLP.  
The $i$-th dimension value represents the probability of a node belonging to the $i$-th class.  

Let $\bm{Z}_c^1$ and $\bm{Z}_c^2$  be the output matrix of the projection head $\mathcal{P}(\cdot)$ for the node embeddings, i.e., $\bm{Z}_c^i = \mathcal{P}(\bm{H}_v^i), i = \{1, 2\}$.
The cluster-level contrastive loss~\cite{li2021contrastive} is:
\vspace{-1mm}
\begin{equation}
  \mathcal{L}_{c}= \frac{1}{2C}\sum _{i=1}^C (\ell_{c}^{1}(\bm{Z}^1_{c:i})+\ell_{c}^{2}(\bm{Z}^2_{c:i})),
\label{eq10}
\vspace{-1mm}
\end{equation}
where $C$ is the number of clusters. Formula~\ref{eq10} aims to achieve two goals. First, for any $i \in [1, C]$, $\bm{Z}_{c:i}^1$ is very similar to  $\bm{Z}_{c:i}^2$, i.e., a node $v_j$ from the input hypergraph has a similar probability of falling into the $i$-th class, regardless of whether its embedding is learned from $G_1$ or $G_2$. Second, $\bm{Z}_{c:i}^1$ and $\bm{Z}_{c:j}^2$ should be dissimilar when $i\neq j$ because a node should have dissimilar probabilities of falling into different classes. Formula~\ref{eq10}  is an average of the cluster contrastive losses  $\ell_{c}^{1}$ and $\ell_{c}^{2}$ computed from the two views. Since the two loss terms are symmetric, we only show $\ell_{c}^1(\bm{Z}^1_{c:i})$:
\vspace{-1mm}
\begin{equation}
  \ell_{c}^1(\bm{Z}^1_{c:i})= -\log \frac{\exp(s(\bm{Z}^1_{c:i}, \bm{Z}^2_{c:i})/\tau_{c} )}{\sum_{j=1}^C \exp (s(\bm{Z}^1_{c:i}, \bm{Z}^2_{c:j})/\tau_{c})}
\label{eq11}
\end{equation}
Here, $\tau_{c}$ is a temperature parameter corresponding to $\ell_{c}^1$   that controls the softness of the contrasts among positive and negative sample pairs, and $s(\cdot)$ is a vector similarity function --- we use  inner product.

\noindent (ii) \emph{Hyperedge-level contrastive loss}. 
This loss aims  to maximize the mutual information of hyperedge representations between different views.
We project hyperedge embeddings $\bm{H}_e^1$ and $\bm{H}_e^2$ with the projection head (the two-layer MLP) and output $\bm{Z}_e^1$ and $\bm{Z}_e^2$. 
For a hyperedge $e_i$, $\bm{z}^1_{e,{e_i}}$ and $\bm{z}^2_{e,e_i}$ form a positive sample pair
, while the projected embeddings of different hyperedges from different views form the negative sample pairs.  We thus define the hyperedge-level contrastive loss as:
\vspace{-1mm}
\begin{equation}
  \mathcal{L}_{e}= \frac{1}{2|E|}\sum _{e_i\in E} (\ell_{e}^{1}(\bm{z}^1_{e,e_i})+\ell_{e}^{2}(\bm{z}^2_{e,e_i}))
\label{eq12}
\vspace{-1mm}
\end{equation}
Similar to the cluster-level loss, $\ell_{e}^{1}$ and $\ell_{e}^{2}$ in Formula~\ref{eq12} denote the hyperedge contrastive losses of the two views. We thus only show $\ell_{e}^1(\bm{z}^1_{e,e_i})$: 
\vspace{-1mm}
\begin{equation}
  \ell_{e}^1(\bm{z}^1_{e,e_i})= -\log \frac{\exp(s(\bm{z}^1_{e,e_i}, \bm{z}^2_{e,e_i})/\tau_{e} )}{\sum_{e_j\in E} \exp (s(\bm{z}^1_{e,e_i}, \bm{z}^2_{e,e_j})/\tau_{e})},
\label{eq13}
\vspace{-1mm}
\end{equation}
 where $\tau_{e}$ is a temperature parameter corresponding to $\ell_{e}^1$.

Overall, we define the basic contrastive loss as the weighted sum of cluster-level and hyperedge-level contrastive losses:
\vspace{-1mm}
\begin{equation}
  \mathcal{L}_{cl} = \lambda _{c}  \mathcal{L}_{c} + \lambda _{e} \mathcal{L}_{e},
\label{eq14}
\vspace{-1mm}
\end{equation}
\noindent  where $\lambda _{c}>0$ and $\lambda _{e}>0$ are hyperparameters for balancing the importance of  $\mathcal{L}_{c}$ and $\mathcal{L}_{e}$.

\noindent\textbf{Cross-Validation Contrastive Loss.} In GCL, contrastive loss is typically designed to compare  graph elements of the same data type across different views, leading to, e.g., node-level and edge-level contrastive loss. However, this strategy ignores topological relationships between elements of different data types. To rectify the shortcoming, TriCL~\cite{lee2022m} introduces a contrastive loss that verifies the relationships between nodes and hyperedges, thus improving the performance of GCL.  
Building on that study, we propose a cross-validation contrastive loss that matches the  embedding distributions between nodes and clusters, nodes and hyperedges, and hyperedges and clusters.  This enables the embeddings of different types of data (i.e., nodes, clusters, and hyperedges) to obtain additional information beyond that contained in the data of the same type, 
resulting in significantly improved performance~(see Section~\ref{section:5.5}).

\noindent(i) \emph{Node-cluster contrastive loss.} This loss aims to distinguish the relationship between a node's embedding and its own clustering result from the relationship between the node's embedding and the clustering results of the other nodes. The node's embedding and its own clustering result are considered as a positive sample pair, while the node's embedding and the clustering results of the other nodes are considered as a negative sample pair. 
 Continuing the framework in Fig.~\ref{framework}, we calculate the
 loss between node embedding $\bm{Z}_v^1$ of $G_1$ and the clustering result $\bm{Z}_c^2$ of $G_2$,
where $\bm{Z}_v^1$ and $\bm{Z}_c^2$ are projected from $\bm{H}_v^1$ and $\bm{H}_v^2$, respectively. 
Then $\bm{z}^1_{v,v_1}$ and $\bm{z}^2_{c,c_1}$ constitute a positive sample pair, while $\bm{z}^1_{v,v_1}$ and $\bm{z}^2_{c,c_3}$ constitute a negative sample pair. The  node-cluster contrastive loss is defined as:
\vspace{-1mm}
\begin{equation}
  \mathcal{L}_{nc}= \frac{1}{2|V|}\sum _{v_i\in V} (\ell_{nc}(\bm{z}^1_{v,v_i},\bm{z}^2_{c,v_i})+\ell_{nc}(\bm{z}^2_{v,v_i},\bm{z}^1_{c,v_i})),
\label{eq15}
\vspace{-1mm}
\end{equation}
where $\ell_{nc}$ is the loss function for each node-cluster pair:  
\begin{equation}
\ell_{nc}(\bm{z}_{v,v_i}^p,\bm{z}_{c,v_i}^q) = -\log \frac{\exp(\mathcal{D}(\bm{z}_{v,v_i}^p, \bm{z}_{c,v_i}^q)/\tau_{nc} )}{\sum_{v_j\in V} \exp (\mathcal{D}(\bm{z}_{v,v_j}^p, \bm{z}_{c,v_j}^q)/\tau_{nc})},
\label{eq16}
\end{equation}
where $p,q \in \{1,2\}$, $\mathcal{D}(\cdot)$ is a discriminator, and $\tau_{nc}$ is a temperature parameter corresponding to $\ell_{nc}$.

\noindent(ii) \emph{Node-hyperedge contrastive loss.}
Following TriCL~\cite{lee2022m}, (i) the embeddings of node $v_i$ from one view and the hyperedge $e_j\,(v_i\in e_j)$ from the other view are treated as a positive sample pair, and (ii) the embeddings of node $v_k$ and the hyperedge $e_j\, (v_k \notin e_j)$ from different views are treated as a negative sample pair. Continuing the framework in Fig.~\ref{framework}, we calculate the loss between node embedding $\bm{Z}_v^1$ of $G_1$ and hyperedge embedding $\bm{Z}_e^2$ of $G_2$,
where  $\bm{Z}_v^1$ and $\bm{Z}_e^2$ are projected from $\bm{H}_v^1$ and $\bm{H}_e^2$ respectively. 
Given
$ E=\{e_1,e_2,e_3,e_4\},E_{\{v_1\}} = \{e_1, e_2\}$,  and an anchor 
$v_1$, $\bm{z}^1_{v,v_1}$ and $\bm{z}^2_{e,e_1}$ are positive samples, $\bm{z}^1_{v,v_1}$ and $\bm{z}^2_{e,e_2}$ are positive samples, and $\bm{z}^1_{v,v_1}$ and $\bm{z}^2_{e,e_4}$ are negative samples. The node-hyperedge contrastive loss $\mathcal{L}_{ne}$ is defined as: 
\begin{equation}
  \mathcal{L}_{ne}= \frac{1}{2M}\sum _{v_i\in V}\sum _{e_j\in E_{\{v_i\}}}(\ell_{ne}(\bm{z}^1_{v,v_i},\bm{z}^2_{e,e_j})+\ell_{ne}(\bm{z}^2_{v,v_i},\bm{z}^1_{e,e_j})),
\label{eq17}
\vspace{-2mm}
\end{equation}
where $M=\sum _{v_i\in V}|E_{\{v_i\}}|$ and $\ell_{ne}$ is defined as: 
\vspace{-1mm}
\begin{equation}
\begin{aligned}
 \ell_{ne}(\bm{z}^p_{v,v_i},\bm{z}^q_{e,e_j}) = 
 & -\log \frac{\exp(\mathcal{D}(\bm{z}^p_{v,v_i}, \bm{z}^q_{e,e_j})/\tau_{ne} )}{\sum_{v'\in V} \exp (\mathcal{D}(\bm{z}^p_{v,v'}, \bm{z}^q_{e,e_j})/\tau_{ne})} \\
 & -\log \frac{\exp(\mathcal{D}(\bm{z}^p_{v,v_i}, \bm{z}^q_{e,e_j})/\tau_{ne} )}{\sum_{e'\in E} \exp (\mathcal{D}(\bm{z}^p_{v,v_i}, \bm{z}^q_{e,e'})/\tau_{ne})},
\end{aligned}
\label{eq18}
\end{equation}
where $p, q \in \{1,2\}$ and $\tau_{ne}$  is a temperature parameter corresponding to $\ell_{ne}$.

\noindent(iii) \emph{Hyperedge-cluster contrastive loss.}
This loss resembles the node-hyperedge contrastive loss in that the clustering result can be viewed as another mapping of node embeddings. If $v_i \in e_j $, the cluster results of $v_i$ and the embedding of $e_j$ are considered as a pair of positive samples; otherwise, they are designated as a pair of negative samples. 
Continuing the framework in Fig.~\ref{framework}, we calculate the 
loss between cluster results $\bm{Z}_c^1$ of $G_1$ and hyperedge embedding $\bm{Z}_e^2$ of $G_2$,
where $\bm{Z}_c^1$ and $\bm{Z}_e^2$ are projected from $\bm{H}_v^1$ and $\bm{H}_e^2$, respectively.
Given $e_1 = \{v_1,v_2\}$ and $e_2=\{v_1\}$  (i.e. $E_{\{v_1\}} = \{e_1, e_2\}$), $\bm{z}^1_{v,v_1}$ and $\bm{z}^2_{e,e_1}$ form a pair of positive samples, while $\bm{z}^1_{v,v_1}$ and $\bm{z}^2_{e,e_4}$ form a pair of negative samples.  The hyperedge-cluster contrastive loss is defined as:
\begin{equation}
   \mathcal{L}_{ec}= \frac{1}{2M}\sum_{v_i\in V}
   \sum _{e_j \in E_{\{v_i\}}}
   (\ell_{ec}(\bm{z}^1_{c,v_i},\bm{z}^2_{e,e_j})+\ell_{ec}(\bm{z}^2_{c,v_i},\bm{z}^1_{e,e_j})),
\label{eq19}
\end{equation}
where $M=\sum _{v_i\in V}|E_{\{v_i\}}|$. The objective function $\ell_{ec}$ for hyperedge-cluster contrast is defined as: 
\vspace{-1mm}
\begin{equation}
 \begin{aligned}
 \ell_{ec}(\bm{z}^p_{c,v_i},\bm{z}^q_{e,e_j}) = 
 & -\log \frac{\exp(\mathcal{D}(\bm{z}^p_{c,v_i}, \bm{z}^q_{e,e_j})/\tau_{ec} )}{\sum_{v'\in V} \exp (\mathcal{D}(\bm{z}^p_{c,v'}, \bm{z}^q_{e,e_j})/\tau_{ec})} \\
 & -\log \frac{\exp(\mathcal{D}(\bm{z}^p_{c,v_i}, \bm{z}^q_{e,e_j})/\tau_{ec} )}{\sum_{e'\in E} \exp (\mathcal{D}(\bm{z}^p_{c,v_i}, \bm{z}^q_{e,e'})/\tau_{ec})},
\end{aligned}
\label{eq20}
\vspace{-1mm}
\end{equation}
where $p,q \in \{1,2\}$ and $\tau_{ec}$ is a temperature parameter corresponding to $\ell_{ec}$. Integrating Formulas~\ref{eq15},~\ref{eq17}, and~\ref{eq19}, the cross-validation contrastive loss becomes:
\vspace{-1mm}
\begin{equation}
\mathcal{L}_{crocl} = \lambda _{nc}  \mathcal{L}_{nc} + \lambda _{ne} \mathcal{L}_{ne} + \lambda _{ec} \mathcal{L}_{ec},
\label{eq21}
\vspace{-1mm}
\end{equation}
where $\lambda _{nc}$, $\lambda _{ne}$, and $\lambda _{ec}$ are positive hyperparameters that balance  the importance of the three losses. Finally, the overall loss function of CHGNN is:
\vspace{-1mm}
\begin{equation}
  \mathcal{L} = \mathcal{L}_{sim} + \mathcal{L}_{cls} + \mathcal{L}_{h}+\mathcal{L}_{cl} +\mathcal{L}_{crocl},
\label{eq22}
\vspace{-1mm}
\end{equation}
where $\mathcal{L}_{sim}$ is the similarity loss from hypergraph view generators, $\mathcal{L}_{cls}$ is the classification loss, $\mathcal{L}_{h}$ is the hyperedge homogeneity loss, $\mathcal{L}_{cl}$ is the basic contrastive loss, and $\mathcal{L}_{crocl}$ is the cross-validation loss. We train CHGNN in an end-to-end manner. This (i) leads to classification-oriented view generators that sample the hypergraph to satisfy the InfoMin principle~\cite{tian2020makes} and (ii) provides  better-augmented views for CL.

\vspace{-4mm}
\subsection{Enhanced Contrastive Loss Training}
\label{section:4.5}

\begin{figure}
\centering
\includegraphics[width=6.5cm]{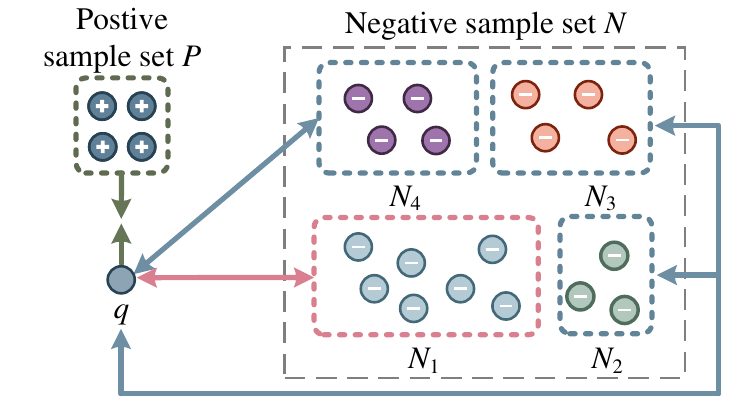}
\caption{\color{black}An example of enhanced contrastive loss training. (i) Each circle represents a sample.
(ii) Circles with the same color as $q$ are in the same class as $q$. Circles with different colors are in different classes, with colors closer to $q$ indicating a higher similarity to $q$.
(iii) Green arrows indicate attraction, while blue and red arrows indicate repulsion in CL, with red indicating stronger repulsion.}
\label{fig_4_5}
\vspace{-5mm}
\end{figure}

\vspace{-0.5mm}
\subsubsection{Adaptively Distancing Negative Samples}
\label{section:4.5.1}
CL with the InfoNCE  loss~\cite{oord2018representation} typically requires a large number of negative samples to be effective, which is a practical challenge. Given the number of both clusters and classes $C$ (see Section~\ref{section:4.4.2}), the number of negative samples for a training node is $C-1$ (see Section~\ref{section:4.4.2}). We have $C<70$ for all of the nine datasets used in experiments (see Section~\ref{section:5}), which implies a scarcity of negative samples in real-world scenarios. This hinders the efficient  learning of feature information. To address this, 
we propose to identify negative samples that are difficult to distinguish from an instance (i.e., a target node cluster or a hyperedge) by applying a novel adaptive distancing method. 
The method effectively creates a ``repulsion'' between the negative samples and an instance, improving their separability and removing the necessity of training a large number of negative samples. This way, the proposed method can enable efficient and effective feature learning, even with few negative samples. We introduce the concept of \textit{relative penalty}~\cite{wang2021understanding}, before detailing the method.
\vspace{-1mm}
\begin{myDef}
\label{def:9}
Given an instance $q$ and a set of negative samples  $N=\{n_1, n_2, \ldots, n_{|N|}\}$, $\mathcal{R}=\{r_{n_1}, r_{n_2}, \ldots, r_{n_{|N|}}\}$ is a set of \textbf{relative penalties}, where $r_{n_i}$ is used to adjust the similarity between $q$ and nodes in $n_i$.
\end{myDef}
\vspace{-1mm}
We use $s_{z}(q, n_i)$ to denote the similarity between the embeddings of an instance $q$ and a negative sample $n_i$. According to the above discussion, it is desirable to impose a high $r_{n_i}$ on negative samples with a high $s_z(q, n_i)$, in order to increase the embedding distances between nodes in $n_i$ and $q$ and thus to differentiate them. 

\vspace{-1mm}
\begin{myDef}
\label{def:10}
Given an instance $q$, a set of negative samples $N=\{n_1, n_2, \ldots, n_{|N|}\}$, and a set of relative penalties $\mathcal{R}=\{r_{n_1}, r_{n_2}, \ldots, r_{n_{|N|}}\}$, the \textbf{desired ranking} of relative penalties in $\mathcal{R}$ satisfies $\forall s_z(q, n_{i})> s_z(q, n_{j}) \; (r_{n_i} > r_{n_j}), 1 \leq i,j  \leq m$.
\vspace{-1mm}
\end{myDef}

\begin{myEx}
In Fig.~\ref{fig_4_5}, $P$ is a positive sample set, and $N$ is a negative sample set partitioned into non-overlapped subsets $N_1$, $N_2$, $N_3$, and $N_4$ ($N=N_1\cup N_2 \cup N_3 \cup N_4$). For simplicity, we assume that all negative samples in the same subset have the same embedding similarity to $q$, i.e., 
$s_z(q,n_{i})=s_z(q,n_{j})$, and we let $s_z(q,N_k)=s_z(q,n_{i})\,(n_i, n_j\in N_k, 1\leq k \leq 4)$, where $s_z(q,N_k)$ denotes the similarity between $q$ and any  $q'\in N_k$. Then, we expect $r_{n_i}=r_{n_j}\,(n_i, n_j\in N_k, 1\leq k \leq 4)$ (see Definition~\ref{def:9}). We thus let $r_{N_k}=r_{n_i}$ $(n_i\in N_k, 1\leq k \leq 4)$. Next, given $s_z(q, N_{1})> s_z(q, N_2)>s_z(q, N_3)>s_z(q, N_4)$, ideally we should have  $r_{N_1}>r_{N_2}>r_{N_3}>r_{N_4}$. 
\end{myEx}

 The relative penalty $r_{n_i}$ associated with 
$n_i$ is~\cite{wang2021understanding}:
\begin{equation}
r_{n_i} = \left\lvert \frac{\partial \ell_q}{\partial s_z(q,n_i)} \right\rvert \bigg/ \left\lvert \frac{\partial \ell_q}{\partial s_z(q,p)} \right\rvert,
\label{eq25}
\end{equation}
where $q$ is the training instance, $p$ and $n_i$ are positive and negative samples, $s_z(\cdot)$ is an embedding similarity function, $N$ is a negative sample set, and $\ell_q$ is the InfoNCE loss function of $q$ in each epoch: 
\begin{equation}
\ell_q= -\log \frac{\mathcal{F}(q,p,\tau_{p})}{\mathcal{F}(q,p,\tau_{p}) +\sum_{n_i \in N} \mathcal{F}(q,n_i,\tau_{n})},
\label{eq24}
\end{equation}
where $\mathcal{F}(a,b,c) = \exp(s_z(a,b)/c)$ and $\tau_{n}$ and $\tau_{p}$ are 
 temperature hyperparameters that are fixed and set to the same value during normal contrastive loss training. 
The objective is to dynamically adjust the set of relative penalties $\mathcal{R}$ in order to satisfy the desired ranking (cf. Definitions~\ref{def:9} and \ref{def:10}). Formulas~\ref{eq25} and~\ref{eq24} suggest that the temperature hyperparameter $\tau_{n_i}$ uniquely determines $r_{n_i}$ for the negative sample $n_i$. 
This transfers the objective to the dynamic modification of temperature parameters  for different negative samples during training.

\vspace{-1mm}
\begin{myLem}
 Given a positive sample $p$, a training instance $q$, and a negative sample set $N=\{n_1, n_2\}$ with $s_z(q, n_{1})> s_z(q, n_{2})$ then we have  $r_{n_1} > r_{n_2}$ if $\tau_{n_1}=\tau^{ub}/s_z(q,n_1)$ and $\tau_{n_2}=\tau^{ub}/s_z(q,n_2)$, where $\tau^{ub}$ is the upper bound of the temperature  parameter.
\end{myLem}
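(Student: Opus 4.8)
The plan is to derive a closed form for each relative penalty $r_{n_i}$ directly from Formulas~\ref{eq25} and~\ref{eq24} and then reduce the claimed inequality to a one-variable monotonicity statement. First I would generalize the denominator of Formula~\ref{eq24} so that each negative sample carries its own temperature, writing $\ell_q = -\log f_p + \log Z$ with $f_p = \exp(s_z(q,p)/\tau_p)$, $f_i = \exp(s_z(q,n_i)/\tau_{n_i})$, and $Z = f_p + f_1 + f_2$. Differentiating with respect to $s_z(q,n_i)$ gives $\partial \ell_q/\partial s_z(q,n_i) = f_i/(\tau_{n_i} Z) > 0$, while differentiating with respect to $s_z(q,p)$ gives $\partial \ell_q/\partial s_z(q,p) = (f_p/Z - 1)/\tau_p = -(f_1+f_2)/(\tau_p Z) < 0$. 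The sign bookkeeping here (the positive-sample gradient is negative, each negative-sample gradient is positive) is the one place to be careful when taking the absolute values in Formula~\ref{eq25}.

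Substituting these two derivatives into Formula~\ref{eq25} yields
\begin{equation}
r_{n_i} = \frac{\tau_p}{\tau_{n_i}} \cdot \frac{f_i}{f_1 + f_2}.
\label{eq:rn_closed}
\end{equation}
The factors $\tau_p$ and $f_1 + f_2$ are identical for $i = 1$ and $i = 2$, so the inequality $r_{n_1} > r_{n_2}$ is equivalent to $f_1/\tau_{n_1} > f_2/\tau_{n_2}$. This cancellation is the key simplification, and it is exactly what makes the per-sample temperature the sole lever controlling the ranking of penalties, as observed in the text just before the statement.

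Finally I would insert the prescribed temperatures $\tau_{n_i} = \tau^{ub}/s_z(q,n_i)$. Writing $x = s_z(q,n_i)$, a direct computation gives $s_z(q,n_i)/\tau_{n_i} = x^2/\tau^{ub}$ and $1/\tau_{n_i} = x/\tau^{ub}$, hence $f_i/\tau_{n_i} = g(x)$ with $g(x) := (x/\tau^{ub})\exp(x^2/\tau^{ub})$. It then suffices to show $g$ is strictly increasing over the relevant range of similarities (where $s_z > 0$, so that the assigned temperatures are positive). Computing $g'(x) = \frac{1}{\tau^{ub}}\exp(x^2/\tau^{ub})(1 + 2x^2/\tau^{ub})$ shows $g'(x) > 0$, since $\tau^{ub} > 0$ and $1 + 2x^2/\tau^{ub} > 0$ hold unconditionally. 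Thus $s_z(q,n_1) > s_z(q,n_2)$ forces $g(s_z(q,n_1)) > g(s_z(q,n_2))$, i.e. $f_1/\tau_{n_1} > f_2/\tau_{n_2}$, and therefore $r_{n_1} > r_{n_2}$ by Equation~\eqref{eq:rn_closed}. I do not anticipate a serious obstacle here: the only genuine care is in the gradient and sign computation of the first step, and in confirming that the \emph{product} $g$ is monotone (via $g'$) rather than merely each of its two factors.
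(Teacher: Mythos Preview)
Your proposal is correct and follows essentially the same route as the paper: both reduce $r_{n_1}>r_{n_2}$ to the inequality $f_1/\tau_{n_1}>f_2/\tau_{n_2}$ after cancelling the common factor $|\partial\ell_q/\partial s_z(q,p)|$, then substitute $\tau_{n_i}=\tau^{ub}/s_z(q,n_i)$. The only cosmetic difference is that the paper argues the last step by noting separately that $\tau_{n_1}<\tau_{n_2}$ and $\exp(s_z(q,n_1)^2/\tau^{ub})>\exp(s_z(q,n_2)^2/\tau^{ub})$ (larger numerator, smaller denominator), whereas you bundle both factors into $g(x)=(x/\tau^{ub})\exp(x^2/\tau^{ub})$ and check $g'>0$; your version is slightly more explicit about the positivity assumption on $s_z$, which the paper leaves implicit.
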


\begin{proof}

According to Formulas~\ref{eq25} and \ref{eq24}: 
\begin{equation}
\begin{aligned}
r_{n_1}= \left\lvert \frac{\partial -\log \frac{\mathcal{F}(q,p,\tau_{p})}{\mathcal{F}(q,p,\tau_{p}) +\sum_{n_i \in N} \mathcal{F}(q,n_i,\tau_{n_i})}}{\partial s_z(q,n_1)} \right\rvert
\bigg/ \left\lvert \frac{\partial \ell_q}{\partial s_z(q,p)} \right\rvert
\end{aligned}
\label{f:proof_ni}
\vspace{-1mm}
\end{equation}
Similarly, we have:
\vspace{-1mm}
\begin{equation}
r_{n_2}= \left\lvert \frac{\partial -\log \frac{\mathcal{F}(q,p,\tau_{p})}{\mathcal{F}(q,p,\tau_{p}) +\sum_{n_i \in N} \mathcal{F}(q,n_i,\tau_{n_i})}}{\partial s_z(q,n_2)} \right\rvert
\bigg/ \left\lvert \frac{\partial \ell_q}{\partial s_z(q,p)} \right\rvert
\label{f:proof_nj}
\end{equation}
Since Formulas~\ref{f:proof_ni} and~\ref{f:proof_nj} have the same denominator, proving $r_{n_1}>r_{n_2}$ is equivalent to proving:
\vspace{-1mm}
\begin{equation}
\frac{\mathcal{F}(q,n_1,\tau_{n_1})}{\mathcal{U}_1}>\frac{(\mathcal{F}(q,n_2,\tau_{n_2})}{\mathcal{U}_2},
\label{f:F}
\end{equation}
\noindent  where $\mathcal{U}_i=\tau_{n_i}(\mathcal{F}(q,n_1,\tau_{n_1})+\mathcal{F}(q,n_2,\tau_{n_2})+\mathcal{F}(q,p,\tau_{p}))$ ($i=1,2$). Given  $\tau_{n_i}=\tau^{ub}/s_z(q,n_i)$,  we have $\mathcal{F}(q,n_i,\tau_{n_i})=\exp(s_z(q,n_i)^2/\tau^{ub})$. Since $s_z(q,n_1)>s_z(q,n_2)$, $\tau_{n_1}< \tau_{n_2}$ and $\mathcal{F}(q,n_1,\tau_{n_1})>\mathcal{F}(q,n_2,\tau_{n_2})$. Thus, $r_{n_1}>r_{n_2}$ according to Formula~\ref{f:F}.
\end{proof}

When $|N|>2$,   $\tau_{n_i}=\tau^{ub}/s_z(q,n_i)$  ($n_i \in N$) guarantees that $r_{n_1}>r_{n_2}>\ldots>r_{n_{|N|}}$ if $s_z(q,{n_1})>s_z(q,{n_2})>\ldots>s_z(q,{n_{|N|}})$. This can be derived straightforwardly.
 
\vspace{-2mm}
\subsubsection{Training Strategy}
\label{section:4.5.2}

In Section~\ref{section:4.5.1}, we detail the process and benefits of adaptively distancing negative samples. However, the representations of instances (i.e., node clusters or hyperedges)  contain only poor feature information during the initial stages of training. 
Thus, we cannot start distancing two  negative samples from each other before we have learned sufficient information for classification. This requires us to monitor the progress of the learning.

\noindent\textit{\textbf{Differentiating cluster-level loss.}} If centers of  different clusters (i.e., classes) are well separated,  the cluster-level representations can capture rich feature information. Given a training set $V_L=\{V_L^1, V_L^2, \ldots, V_L^C\}$, where $C$ is the number of classes and $V_L^i$ includes the labeled nodes of class $c_i$, we calculate the embedding center $\bm{h}^{(k)}_{c_i}$ of cluster embddings $\bm{z}^{(k)}_{c,v_{\{i\}}}$ ($v_{\{i\}} \in V_L^i$) after $k$ epochs. Here, $\bm{z}^{(k)}_{c,v_{\{i\}}} = 1/2(\bm{z}^{1, (k)}_{c,v_{\{i\}}}+\bm{z}^{2,(k)}_{c,v_{\{i\}}})$ and  $(k)$ denotes the $k$-th epoch embeddings. We denote  the similarities between centers $\bm{h}^{(k)}_{c_i}$ and $\bm{h}^{(k)}_{c_j}$ at the $k$-th epoch by $m_{ij}^{(k)}$. If $\overline{m}^{(k)}_c=\frac{1}{\binom{C}{2}}\sum_{(i,j) \in \binom{C}{2}}m_{ij}^{(k)}$ is below a threshold $\varepsilon_c$, we adjust the fixed temperature parameters $\tau_{c}$ of Formula~\ref{eq11}. For each negative sample $c_i$ with anchor $c_q$ at the $k$-th epoch, the temperature parameter $\tau_{c,(q,i)}^{(k)}$ is defined as follows:
\vspace{-1mm}
\begin{equation}
\tau_{c,(q,i)}^{(k)}=\begin{cases}

\tau^{ub}_{c}  &\text{if} \quad \overline{m}_c^{(k)}\geqslant \varepsilon_{c} \\

\tau^{ub}_{c}/s_z(\bm{Z}^{1,(k)}_{c:q}, \bm{Z}^{2,(k)}_{c:i}) &\text{otherwise}
\end{cases},
\label{eq31}
\vspace{-2mm}
\end{equation}
where $\varepsilon_{c}$ is a threshold hyperparameter, $\tau_{c}^{ub}$ is the upper bound, and $\bm{Z}^{1,(k)}_{c:q}$ and $\bm{Z}^{2,(k)}_{c:i}$ are two cluster embedded representations at the $k$-th epoch.

\noindent\textit{\textbf{Differentiating hyperedge-level contrastive loss.}} This procedure is similar to that of differentiating node-level contrastive loss. 
Since no class labels for hyperedges are available, we construct hyperedges with the training set $V_L$. All nodes in $V_L^i$ are assigned to a  hyperedge $e_{V_L^i}$.  Since a hyperedge describes a relationship among multiple nodes and $V_L^i$ includes the labeled nodes of class $c_i$, we assume that the constructed hyperedge $e_{V_L^i}$  belongs to class $c_i$. If the distributions of the constructed hyperedge representations differ from each other, the representation of the hyperedge is rich in feature information. Thus,  when the mean $\overline{m}_e^{(k)}$ of the similarities between hyperedge representations is below a threshold $\varepsilon_e$ at the $k$-th epoch, we adjust the temperature parameters $\tau_{e}$ of Formula~\ref{eq13}.  For each negative sample $e_i$ with  anchor $e_q$ at the $k$-th epoch, the temperature parameter $\tau_{e,(q,i)}^{(k)}$ is defined as follows:
\vspace{-1mm}
\begin{equation}
\tau_{e,(q,i)}^{(k)}=\begin{cases}

\tau_{e}^{ub}  &\text{if} \quad \overline{m}_e^{(k)} \geqslant \varepsilon_{e} \\

\tau_{e}/s_z(\bm{Z}^1_{e,e_q}, \bm{Z}^2_{e,e_i}) &\text{otherwise}
\end{cases},
\label{eq32}
\vspace{-1mm}
\end{equation}
where $\varepsilon_{e}$ is a threshold hyperparameter, $\tau_{e}^{ub}$ is the upper bound, and $\bm{Z}^{1,(k)}_{e,e_q}$ and $\bm{Z}^{2,(k)}_{e,e_i}$ are two hyperedge embedded representations at the $k$-th epoch.

  \vspace{-5mm}
  \section{Experimental Study}
\label{section:5}

 \subsection{Experimental Setup} 

 \subsubsection{Datasets} 
 \label{datasets}
 We evaluate CHGNN using nine publicly available real datasets. Table~\ref{table1} summarizes statistics of their hypergraphs.

\begin{table}[t]
 \centering
 \caption{Hypergraphs  statistics of the nine datasets. }
 \renewcommand\tabcolsep{3pt}
 \begin{tabular}{lrrrrr}
  \toprule  
  \textbf{Dataset} & \textbf{\#nodes} & \textbf{\#hyperedges} & \textbf{\#attributes} & \textbf{\#classes} &\textbf{\color{black}label ratio}\\
  \midrule
  CA-Cora            & 2,708           & 1,072            & 1,433                 & 7          & {\color{black}5.2\%}        \\
  CA-DBLP           & 43,413           & 22,535          & 1425                   & 6    & {\color{black}4.0\%}               \\  
  CC-Pubmed           & 19,717           & 7,963           & 500                   & 3      & {\color{black}0.8\%}            \\ 
CC-Citeseer         & 3,312            & 1,079           & 3,703                  & 6         & {\color{black}4.2\%}          \\
CC-Cora             & 2,708            & 1,579            & 1,433                 & 7            & {\color{black}5.2\%}       \\
20news           & 16,242           & 100          & 100                   & 4                & {\color{black}50\%}   \\  
ModelNet           & 12,311           & 12,311          & 100                   & 40     & {\color{black}50\%}              \\  
NTU          & 2,012           & 2,012          & 100                   & 67            & {\color{black}50\%}        \\  
Mushroom           & 8,124           & 298          & 22                   & 2         & {\color{black}50\%}           \\  
\bottomrule
 \end{tabular}

 \label{table1}
 \vspace{-2mm}
 \end{table}

 \begin{table}
\renewcommand\tabcolsep{7pt}
\centering
\caption{Hyperparamter settings for CHGNN.}
\begin{tabular}{lccccccc}
\toprule

\textbf{Dataset}  
& $\bm{\lambda_e}$  
& $\bm{\lambda_{nc}}$
& $\bm{\lambda_{ne}}$ 
& $\bm{\lambda_{ec}}$
& \textbf{nhid}
& \textbf{nproj}
\\

  \midrule

CA-Cora
&0.2
&0.2
&0.2
&0.2
&64
&16\\
   
CA-DBLP   
&0.0625
& 0
& 0.25
&0.25
&64
&16\\
CC-Pubmed
& 0.9
&0.8
&0.7
&0.7
&64
&16\\
CC-Citeseer
&0.8
& 1
&0.8
& 0.8
&64
&16\\
CC-Cora
&0.7
& 0.5
&0.9
& 0.9
&64
&16\\
20news
& 0.1
&  0.8
& 0.6
&  0.6
&64
&16\\
ModelNet
& 0.7
& 0.1
&0.032
& 0
&256
&16\\
NTU
& 0.3
& 0.1
&  0.7
& 0.7
&512
&512\\
Mushroom
& 0.2
&0
&0.1
& 0.024
&64
&16\\

\bottomrule
\end{tabular}
  
  \label{table_hp}
  \vspace{-5mm}
  \end{table}

  \begin{table*}
  \centering
   \renewcommand\tabcolsep{5pt}
   
  \caption{ \color{black}Comparison results (\%), where (i) $*$ indicates unsupervised learning methods; (ii) values after $\pm$ are standard deviations; (iii) the best and second-best performances are marked in bold and underlined, respectively; and (iv) OOM denotes out of memory on a 24GB GPU.}

\begin{tabular}{@{}llcccccccccc@{}}
  \toprule

    &{\color{black}\textbf{Dataset}}	&{\color{black}	\textbf{CA-Cora}}	&{\color{black}	\textbf{CA-DBLP}}	&{\color{black}	\textbf{CC-Pubmed}}	&{\color{black}	\textbf{CC-Citeseer}}	&{\color{black}	\textbf{CC-Cora}}	&{\color{black}	\textbf{20news}}	&{\color{black}	\textbf{ModelNet}}	&{\color{black}	\textbf{NTU}}	&{\color{black}	\textbf{Mushroom}}
    
    \\ \midrule
   
   \multirow{5}{*}{\rotatebox{90}{\textbf{\color{black}Non-Graph}}}

  &{\color{black}MLP				}	
&{\color{black}	57.8	$\pm$	4	}	
&{\color{black}	80.6	$\pm$	0.8	}	
&{\color{black}	73.3	$\pm$	1.6	}	
&{\color{black}	59.1	$\pm$	4	}	
&{\color{black}	59.9	$\pm$	2.6	}	
&{\color{black}	81.1	$\pm$	1.1	}	
&{\color{black}	95.9	$\pm$	0.8	}	
&{\color{black}	87.2	$\pm$	2.8	}	
&{\color{black}	98.4	$\pm$	1.7	}	
\\					
					
&{\color{black}SimCLR$^*$				}	
&{\color{black}	44.2	$\pm$	1.6	}	
&{\color{black}	76.9	$\pm$	0.4	}	
&{\color{black}	72.5	$\pm$	1.4	}	
&{\color{black}	45.5	$\pm$	4.4	}	
&{\color{black}	39.5	$\pm$	1.3	}	
&{\color{black}	80.7	$\pm$	0.7	}	
&{\color{black}	94.4	$\pm$	3.9	}	
&{\color{black}	82.8	$\pm$	2.5	}	
&{\color{black}	96.3	$\pm$	0.3	}	
\\

&{\color{black}MoCo$^*$					}
&{\color{black}	42.2	$\pm$	2		}
&{\color{black}	75.9	$\pm$	0.5		}
&{\color{black}	66.3	$\pm$	1.5		}
&{\color{black}	43.8	$\pm$	1.4		}
&{\color{black}	38.4	$\pm$	1.5		}
&{\color{black}	80	$\pm$	0.4		}
&{\color{black}	96.5	$\pm$	2.2		}
&{\color{black}	82.1	$\pm$	2		}
&{\color{black}	88.6	$\pm$	0.8		}
\\					
					
&{\color{black}SimCLRv2					}
&{\color{black}	55	$\pm$	5.3		}
&{\color{black}	80.1	$\pm$	0.4		}
&{\color{black}	74.5	$\pm$	2.8		}
&{\color{black}	57.8	$\pm$	4		}
&{\color{black}	57.6	$\pm$	3.7		}
&{\color{black}	81.4	$\pm$	0.4		}
&{\color{black}	95.3	$\pm$	0.2		}
&{\color{black}	85.9	$\pm$	1.3		}
&{\color{black}	98.3	$\pm$	0.2		}
\\					
					
&{\color{black}Suave					}
&{\color{black}	61.3	$\pm$	2.8		}
&{\color{black}	80.8	$\pm$	2.9		}
&{\color{black}	75.2	$\pm$	2.3		}
&{\color{black}	62.8	$\pm$	2.3		}
&{\color{black}	60.8	$\pm$	3		}
&{\color{black}	81.5	$\pm$	0.6		}
&{\color{black}	96.6	$\pm$	0.3		}
&{\color{black}	89.1	$\pm$	0.7		}
&{\color{black}	98.6	$\pm$	0.8		}
\\

   \midrule
   \multirow{6}{*}{\rotatebox{90}{\textbf{\color{black}Graph}}}

&{\color{black}GCN				}
&{\color{black}	63.7	$\pm$	1.8	}
&{\color{black}	85.6	$\pm$	0.3	}
&{\color{black}	74.6	$\pm$	1	}
&{\color{black}	59.4	$\pm$	2.3	}
&{\color{black}	60.4	$\pm$	1.3	}
&{\color{black}	83.2	$\pm$	0.4	}
&{\color{black}	95.9	$\pm$	0.2	}
&{\color{black}	85.1	$\pm$	0.9	}
&{\color{black}	93.6	$\pm$	0.2	}
\\				
				
&{\color{black}GAT				}
&{\color{black}	65.7	$\pm$	1.9	}
&{\color{black}	85.2	$\pm$	0.3	}
&{\color{black}	74.8	$\pm$	1.8	}
&{\color{black}	59.3	$\pm$	1.7	}
&{\color{black}	60.2	$\pm$	1.7	}
&{\color{black}	79.7	$\pm$	7	}
&{\color{black}	95.5	$\pm$	0.2	}
&{\color{black}	82.8	$\pm$	1.4	}
&{\color{black}	90.3	$\pm$	13	}
\\				
				
&{\color{black}DGI$^*$				}
&{\color{black}	64.4	$\pm$	0.3	}
&{\color{black}	79.3	$\pm$	0.3	}
&{\color{black}	70.9	$\pm$	0.2	}
&{\color{black}	59.7	$\pm$	0.3	}
&{\color{black}	61.4	$\pm$	0.3	}
&{\color{black}	82.8	$\pm$	0.3	}
&{\color{black}	95.9	$\pm$	0.2	}
&{\color{black}	79.5	$\pm$	0.2	}
&{\color{black}	85.7	$\pm$	0.1	}
\\				
				
&{\color{black}SimGRACE$^*$				}
&{\color{black}	63.1	$\pm$	1.5	}
&{\color{black}	82.2	$\pm$	0.3	}
&{\color{black}	74.5	$\pm$	1.2	}
&{\color{black}	57.2	$\pm$	1.8	}
&{\color{black}	64.3	$\pm$	2	}
&{\color{black}	82.5	$\pm$	0.5	}
&{\color{black}	95.8	$\pm$	0.3	}
&{\color{black}	76.6	$\pm$	1.4	}
&{\color{black}	90	$\pm$	0.3	}
\\				
				
&{\color{black}HomoGCL$^*$				}
&{\color{black}	58.3	$\pm$	1.6	}
&{\color{black}	OOM			}
&{\color{black}	74.8	$\pm$	1	}
&{\color{black}	55.8	$\pm$	1.2	}
&{\color{black}	54.3	$\pm$	1.5	}
&{\color{black}	82.4	$\pm$	0.3	}
&{\color{black}	95.7	$\pm$	0.2	}
&{\color{black}	74.3	$\pm$	0.7	}
&{\color{black}	91.7	$\pm$	0.3	}
\\				
				
&{\color{black}MA-GCL$^*$				}
&{\color{black}	68.5	$\pm$	1.9	}
&{\color{black}	85.6	$\pm$	0.2	}
&{\color{black}	74.1	$\pm$	1.9	}
&{\color{black}	65.8	$\pm$	1.3	}
&{\color{black}	68.9	$\pm$	1.7	}
&{\color{black}	82.9	$\pm$	0.4	}
&{\color{black}	93.8	$\pm$	0.3	}
&{\color{black}	80.3	$\pm$	1.1	}
&{\color{black}	92.5	$\pm$	0.3	}
\\

  \midrule
  \multirow{10}{*}{\rotatebox{90}{\textbf{\color{black}Hypergraph}}}
  
  &{\color{black}HyperConv					}
	&{\color{black}	65.2	$\pm$	1.5	}
	&{\color{black}	88.3	$\pm$	0.1	}
	&{\color{black}	40.1	$\pm$	7	}
	&{\color{black}	37.5	$\pm$	3.3	}
	&{\color{black}	43.9	$\pm$	3.3	}
	&{\color{black}	81	$\pm$	0.5	}
	&{\color{black}	94.3	$\pm$	0.1	}
	&{\color{black}	86.1	$\pm$	0.8	}
	&{\color{black}	97.9	$\pm$	0.2	}
	\\				
					
	&{\color{black}HGNN				}
	&{\color{black}	69.9	$\pm$	3.3	}
	&{\color{black}	78.5	$\pm$	0.3	}
	&{\color{black}	75.3	$\pm$	1.1	}
	&{\color{black}	60.7	$\pm$	2.2	}
	&{\color{black}	62.7	$\pm$	3.7	}
	&{\color{black}	81.1	$\pm$	0.6	}
	&{\color{black}	95.3	$\pm$	0.6	}
	&{\color{black}	86.2	$\pm$	2.3	}
	&{\color{black}	96.7	$\pm$	0.9	}
	\\				
					
	&{\color{black}HyperGCN				}
	&{\color{black}	72.1	$\pm$	5	}
	&{\color{black}	88.3	$\pm$	0.6	}
	&{\color{black}	73.1	$\pm$	9.6	}
	&{\color{black}	60.1	$\pm$	3.6	}
	&{\color{black}	62.5	$\pm$	3.5	}
	&{\color{black}	81.3	$\pm$	0.5	}
	&{\color{black}	73.7	$\pm$	13.2	}
	&{\color{black}	53	$\pm$	8.7	}
	&{\color{black}	46.4	$\pm$	0.1	}
	\\

	&{\color{black}HNHN				}
	&{\color{black}	65.5	$\pm$	4.5	}
	&{\color{black}	87.6	$\pm$	0.7	}
	&{\color{black}	70.7	$\pm$	2.3	}
	&{\color{black}	55.3	$\pm$	2.4	}
	&{\color{black}	62.4	$\pm$	3	}
	&{\color{black}	80.9	$\pm$	0.5	}
	&{\color{black}	97.5	$\pm$	0.4	}
	&{\color{black}	86.5	$\pm$	5.1	}
	&{\color{black}	98	$\pm$	1.2	}
	\\

	&{\color{black}HyperSAGE				}
	&{\color{black}	72.9	$\pm$	3.1	}
	&{\color{black}	87.6	$\pm$	0.8	}
	&{\color{black}	73.1	$\pm$	1.4	}
	&{\color{black}	61.3	$\pm$	2.6	}
	&{\color{black}	67.4	$\pm$	3.8	}
	&{\color{black}	81.8	$\pm$	0.6	}
	&{\color{black}	97.5	$\pm$	0.8	}
	&{\color{black}	85.4	$\pm$	3.8	}
	&{\color{black}	97.8	$\pm$	1.9	}
	\\				
					
	&{\color{black}UniGNN				}
	&{\color{black}	74.3	$\pm$	1.9	}
	&{\color{black}\underline{89.0	$\pm$	0.3}		}
	&{\color{black}	76	$\pm$	1.4	}
	&{\color{black}	62.6	$\pm$	3.2	}
	&{\color{black}	68.3	$\pm$	2.6	}
	&{\color{black}\underline{83.3	$\pm$	1.1}		}
	&{\color{black}	97	$\pm$	0.3	}
	&{\color{black}	85.8	$\pm$	2.8	}
	&{\color{black}	98.1	$\pm$	1.1	}
	\\				
					
	&{\color{black}AST				}
	&{\color{black}	71.1	$\pm$	3.8	}
	&{\color{black}	87.7	$\pm$	0.5	}
	&{\color{black}	64.2	$\pm$	9.8	}
	&{\color{black}	60.3	$\pm$	4.8	}
	&{\color{black}	64.5	$\pm$	3.3	}
	&{\color{black}	82.5	$\pm$	0.9	}
	&{\color{black}\underline{97.9	$\pm$	0.5}		}
	&{\color{black}	88.1	$\pm$	4.6	}
	&{\color{black}	98.9	$\pm$	1.7	}
	\\				
					
	&{\color{black}TriCL$^*$				}
	&{\color{black}\underline{77.6	$\pm$	1.1}		}
	&{\color{black}	OOM			}
	&{\color{black}\underline{76.7	$\pm$	1.3}		}
	&{\color{black}\underline{66.2	$\pm$	1.6}		}
	&{\color{black}\underline{73.2	$\pm$	0.8}		}
	&{\color{black}	80.8	$\pm$	0.5	}
	&{\color{black}	97.6	$\pm$	0.1	}
	&{\color{black}\underline{90.5	$\pm$	0.7}		}
	&{\color{black}\underline{99.7	$\pm$	0.1}		}
	\\

	\cmidrule{2-11}				
	&{\color{black}CHGNN				}
	&{\color{black}\textbf{78.9	$\pm$	0.7}		}
	&{\color{black}\textbf{89.6	$\pm$	1.5}		}
	&{\color{black}\textbf{79.8	$\pm$	1.0}		}
	&{\color{black}\textbf{69.5	$\pm$	1.1}		}
	&{\color{black}\textbf{75.3	$\pm$	0.8}		}
	&{\color{black}\textbf{83.7	$\pm$	0.2}		}
	&{\color{black}\textbf{98.1	$\pm$	0.1}		}
	&{\color{black}\textbf{91.9	$\pm$	0.2}		}
	&{\color{black}\textbf{99.8	$\pm$	0.1}		}
  \\
  \bottomrule
  \end{tabular}
  \label{table2}
\vspace{-5mm}
  \end{table*}

{\color{black}
\noindent \textit{\textbf{CA-Cora\footnote{https://github.com/malllabiisc/HyperGCN/tree/master/data/coauthorship/cora\label{ca-cora}}}} and \textit{\textbf{CA-DBLP\footnote{https://github.com/malllabiisc/HyperGCN/tree/master/data/coauthorship/dblp\label{ca-dblp}}}}  are co-authorship hypergraphs. Each node in a hypergraph represents a paper, and each hyperedge contains all papers by the same author.

\noindent \textit{\textbf{CC-Pubmed\footnote{https://github.com/malllabiisc/HyperGCN/tree/master/data/cocitation/pubmed\label{cc-pubmed}}}}, \textit{\textbf{CC-Citeseer\footnote{https://github.com/malllabiisc/HyperGCN/tree/master/data/cocitation/citeseer\label{cc-citeseer}}}} and \textit{\textbf{CC-Cora\footnote{https://github.com/malllabiisc/HyperGCN/tree/master/data/cocitation/cora\label{cc-cora}}}}   are co-citation hypergraphs. Each node in a hypergraph represents a  cited paper, and each hyperedge contains all publications cited by the same paper. Hyperedges with only one node are removed~\cite{yadati2018hypergcn}.

\noindent \textit{\textbf{20news\footnote{https://github.com/jianhao2016/AllSet/tree/main/data/raw\_data\label{allset}}}} is a news-word hypergraph extracted from the UCI Categorical Machine Learning Repository. Each node represents a piece of news, and each hyperedge represents all news containing the same word.

\noindent \textit{\textbf{ModelNet\textsuperscript{\ref{allset}}}} is a computer vision dataset extracted from Princeton CAD ModelNet40. Each node in the dataset  represents a CAD model. Each hyperedge connects five nearest neighbors of a node in a feature space, which is constructed by  Multi-view Convolutional Neural Network (MVCNN) and Group-View Convolutional Neural Network (GVCNN).

\noindent \textit{\textbf{NTU\textsuperscript{\ref{allset}}}} is a nearest-neighbor hypergraph extracted from a 3D dataset from National Taiwan University. 
Each node in the dataset represents a shape. Each  
 hyperedge connects five nearest neighbors of a node in a feature space which is constructed by MVCNN and GVCNN. 

\noindent \textit{\textbf{Mushroom\textsuperscript{\ref{allset}}}} is a mushroom-feature hypergraph extracted from the UCI Categorical Machine Learning
Repository. Each node represents a mushroom, and each hyperedge  contains all data with the same categorical features~\cite{chien2021you}.

{ CC-Cora, CC-Citeseer, and CC-Pubmed differ from Cora, Citeseer, and Pubmed as used elsewhere~\cite{pmlr-v48-yanga16}. Although CC-Cora and Cora, CC-Citeseer and Citeseer, and CC-Pubmed and Pubmed are generated from the same citation networks,  CC-Cora, CC-Citeseer, and CC-Pubmed are \textit{co-citation hypergraph datasets}, whereas Cora, Citeseer, and Pubmed are \textit{graph-structured citation datasets}. Co-citation hypergraph datasets represent mainly the co-citation relationships among multiple papers, while graph-structured citation datasets encode the direct and pairwise citation relationships between individual papers. }
\vspace{-2mm}
}

\vspace{-1mm}
\subsubsection{Competitors} 

We compare CHGNN with {nineteen} state-of-the-art proposals.

\noindent \textit{\textbf{MLP}}~\cite{rumelhart1985learning}  is a supervised learning framework that primarily focuses on encoding the features of node attributes without considering the underlying graph structure.

{\color{black}
\noindent\textit{\textbf{SimCLR}}~\cite{chen2020simple} is a self-supervised CL method. It adopts the 
augmenting-contrasting learning pattern, consisting of a base encoder and a projection head.

\noindent\textit{\textbf{MoCo}}~\cite{Moco} is also a self-supervised CL method. It incorporates a momentum-updated encoder and a negative sample queue.

\noindent\textit{\textbf{SimCLRv2}}~\cite{simclr2}  is an adaptation of SimCLR.  It involves pretraining a task-agnostic SimCLR model and then fine-tuning through distillation as a teacher model.

\noindent\textit{\textbf{Suave}}~\cite{Suave}  is a state-of-the-art semi-supervised CL method. It leverages cluster-based self-supervised methods.
}

\noindent\textit{\textbf{GCN}}~\cite{kipf2016semi} is a semi-supervised learning framework that employs a convolutional neural network architecture to perform node classification tasks on graphs.

 \noindent\textit{\textbf{GAT}}~\cite{velivckovic2017graph} utilizes a multi-head attention mechanism to dynamically generate aggregated weights for neighboring nodes.

 \noindent\textit{\textbf{DGI}}~\cite{velickovic2019deep} is a self-supervised graph representation learning framework that maximizes the mutual information between node representations and high-level summaries of the graph structure. 
 
 \noindent\textit{\textbf{SimGRACE}}~\cite{xia2022simgrace} is a GCL model that generates graph views by model parameter perturbation.

{\color{black}
 \noindent\textit{\textbf{HomoGCL}}~\cite{HomoGCL} leverages homophily in graph data to learn node representations more effectively.
    
 \noindent\textit{{\textbf{MA-GCL}}}~\cite{MAGCL} focuses on perturbing the architectures of GNN encoders rather than altering the graph input or model parameters.}

 \noindent\textit{\textbf{HyperConv}}\cite{bai2021hypergraph} is a graph representation learning framework that leverages higher-order relationships and local clustering structures for representation learning.

 \noindent\textit{\textbf{HGNN}}~\cite{feng2019hypergraph} is a graph representation learning model that generalizes and then approximates the convolution operation using truncated Chebyshev polynomials to enhance representation learning.

\noindent\textit{\textbf{HyperGCN}}~\cite{yadati2018hypergcn}  is a semi-supervised learning framework based on spectral theory.
 
\noindent\textit{\textbf{HNHN}}~\cite{dong2020hnhn} is a HyperGCN with nonlinear activation functions combined with a normalization scheme that can flexibly adjust the importance of hyperedges and nodes. 
 
\noindent\textit{\textbf{HyperSAGE}}~\cite{arya2020hypersage} is a graph representation learning framework that learns to embed hypergraphs by aggregating messages in a two-stage procedure.
 
\noindent\textit{\textbf{UniGNN}}~\cite{huang2021unignn} is a unified framework for interpreting the message-passing process in hypergraph neural networks. We report the results of UniSAGE, a variant of UniGNN that achieves the best average performance among all the variants in most cases~\cite{huang2021unignn}.
 
\noindent\textit{\textbf{AST}}~\cite{chien2021you} is the state-of-the-art supervised hypergraph-based model, which propagates information by a multiset function learned by Deep Sets~\cite{zaheer2017deep} and Set Transformer~\cite{lee2019set}.
 
\noindent\textit{\textbf{TriCL}}~\cite{lee2022m} is a state-of-the-art unsupervised contrastive hypergraph learning model that maximizes the mutual information between nodes, hyperedges, and groups.

MLP, SimCLR, MoCo, SimCLRv2, and Suave are non-graph based models, while GCN, GAT, DGI, SimGRACE, HomoGCL, and MA-GCL are graph based. The remaining baselines are hypergraph-based models. To enable the comparison of graph-based models with hypergraph-based models, we project hypergraphs to  2-projected graphs~\cite{yoon2020much,lee2022m}. Since unsupervised and supervised learning models exhibit similar performance in studies in the literature~\cite{zhu2021graph,lee2022m,MAGCL,HomoGCL}, we compare CHGNN with seven state-of-the-art unsupervised models: SimCLR, MoCo, DGI, SimGRACE, HomoGCL, MA-GCL, and TriCL.

\vspace{-2mm}
\subsubsection{Hyperparameters and Experimental Settings}

{\color{black}For the co-authorship and co-citation hypergraph datasets, we follow the data splits in the literature~\cite{yadati2018hypergcn,arya2020hypersage, huang2021unignn}. 
For 20news, ModelNet, NTU, and Mushroom, we label 50\% nodes for training and use the remaining nodes for testing, as in related work~\cite{chien2021you, 
cai2022hypergraph, 
kim2022equivariant,
yu2023hypergef}.
Table~\ref{table1} details the label ratios.}
Following existing studies~\cite{chien2021you,feng2019hypergraph,yadati2018hypergcn,arya2020hypersage,huang2021unignn}, we validate the performance of CHGNN and all baselines by performing classification and report the average node classification accuracy. In particular, 10-fold cross-validation is conducted on each dataset.

We set the hyperparameters of the baselines as suggested by the respective papers.  For CHGNN, we set the number of layers in the H-HyperGNN to 2 and use an MLP consisting of two fully connected layers with 16 neurons in the hidden layer.  For  the node classification task, we set $p_{node}=0.2$, $p_{\tau }=0.8$, $\varepsilon_c = 0.2 $, $\varepsilon_e = 0.2 $, $\lambda_h = 1 $, and $\lambda_c = 1 $. Other hyperparameters are listed in Table~\ref{table_hp} where nhid is the embedding dimensionality of the hidden layer and nproj is the projection dimensionality. The source code is available online\footnote{https://github.com/yumengs-exp/CHGNN}.  All experiments are conducted on a server with  an Intel(R) Xeon(R) W-2155 CPU, 128GB memory, and an NVIDIA TITAN RTX GPU.
{
\begin{table*}[t]
\renewcommand\tabcolsep{5pt}
\centering
\caption{Classification accuracy with different label ratios (\%) on CC-Cora and NTU, where (i) $*$ indicates unsupervised learning methods and (ii) the best and second-best performances are marked in bold and underlined, respectively.}

\begin{tabular}{llcccccccccccccccccc}
  \toprule
  
  &\textbf{Dataset}
  &\multicolumn{9}{c}{CC-Cora}
  &\multicolumn{8}{c}{\color{black}NTU}
  \\
  \cmidrule(lr){1-2}  \cmidrule(lr){3-11} \cmidrule(lr){12-19}

  &\textbf{Ratio} 
  & \textbf{0.5\%} 
  & \textbf{1\%}   
  & \textbf{2\%}  
  &\textbf{3\%}
  &\textbf{10\%} 
  &\textbf{20\%}  
  &\textbf{30\%} 
  &\textbf{40\%}
  &\textbf{50\%}
  & \textbf{\color{black}3\%} 
  & \textbf{\color{black}4\%}   
  & \textbf{\color{black}5\%}  
  &\textbf{\color{black}10\%}
  &\textbf{\color{black}20\%} 
  &\textbf{\color{black}30\%}  
  &\textbf{\color{black}40\%} 
  &\textbf{\color{black}80\%}
  \\
  
   \midrule
   \multirow{5}{*}{\rotatebox{90}{\textbf{Non-Graph}}}
   &MLP
   & 31.1
   & 39.9
   & 45.7
   & 52.6
   & 63.8
   & 68.5
   & 70.5
   & 72.7
   & 73.2
   & {\color{black}42.0}
   & {\color{black}50.2}
   & {\color{black}55.3}
   & {\color{black}65.1}
   & {\color{black}77.9}
   & {\color{black}79.9}
   & {\color{black}84.9}
   & {\color{black}91.1}
   \\
  
   &SimCLR$^*$
   & 30.1
   & 39.6
   & 44.7
   & 47.1
   & 54.0
   & 56.0
   & 56.8
   & 57.2
   & 59.9
   & {\color{black}31.5}
   & {\color{black}31.3}
   & {\color{black}35.7}
   & {\color{black}56.9}
   & {\color{black}58.3}
   & {\color{black}59.6}
   & {\color{black}77.2}
   & {\color{black}82.9}
   \\
  
   &MoCo$^*$
   & 24.3
   & 37.3
   & 41.0
   & 43.5
   & 50.5
   & 52.6
   & 53.3
   & 54.5
   & 55.5
   & {\color{black}31.2}
   & {\color{black}31.1}
   & {\color{black}32.1}
   & {\color{black}51.4}
   & {\color{black}51.6}
   & {\color{black}61.3}
   & {\color{black}77.6}
   & {\color{black}84.2}
   \\
  
   &SimCLRv2
   & 30.3
   & 39.1
   & 45.0
   & 48.9
   & 64.0
   & 68.5
   & 70.6
   & 72.8
   & 73.7
   & {\color{black}36.7}
   & {\color{black}50.8}
   & {\color{black}55.6}
   & {\color{black}69.6}
   & {\color{black}78.6}
   & {\color{black}80.2}
   & {\color{black}81.3}
   & {\color{black}87.1}
   \\
  
   &Suave
   & 37.7
   & 42.1
   & 53.9
   & 58.3
   & 64.4
   & 65.9
   & 68.9
   & 67.5
   & 68.7
   & {\color{black}50.4}
   & {\color{black}54.9}
   & {\color{black}61.0}
   & {\color{black}73.5}
   & {\color{black}80.4}
   & {\color{black}83.8}
   & {\color{black}86.8}
   & {\color{black}92.4}
   \\

     \midrule
     \multirow{6}{*}{\rotatebox{90}{\textbf{Graph}}}
     &GCN
     & 21.3
     & 36.7
     & 47.2
     & 53.5
     & 66.4
     & 69.7
     & 72.0
     & 73.2
     & 74.3
     & {\color{black}46.0}
     & {\color{black}50.6}
     & {\color{black}53.0}
     & {\color{black}60.7}
     & {\color{black}64.4}
     & {\color{black}65.6}
     & {\color{black}67.0}
     & {\color{black}90.4}
     \\
     &GAT
     & 24.8
     & 39.4
     & 49.0
     & 53.4
     & 66.0
     & 70.9
     & 72.1
     & 73.0
     & 74.3
     & {\color{black}45.8}
     & {\color{black}53.4}
     & {\color{black}53.3}
     & {\color{black}59.4}
     & {\color{black}63.8}
     & {\color{black}64.9}
     & {\color{black}66.4}
     & {\color{black}88.5}
     \\
     &DGI$^*$
     & 26.8
     & 30.2
     & 38.8
     & 43.3
     & 62.0
     & 63.8
     & 63.6
     & 63.2
     & 62.3
     & {\color{black}47.6}
     & {\color{black}56.3}
     & {\color{black}62.2}
     & {\color{black}65.4}
     & {\color{black}73.3}
     & {\color{black}74.5}
     & {\color{black}79.8}
     & {\color{black}91.2}
     \\
     &SimGRACE$^*$
& 24.9
& 31.4
& 37.1
& 41.4
& 62.9
& 62.1
& 61.8
& 62.6
& 62.7
& {\color{black}48.9}
& {\color{black}56.6}
& {\color{black}61.5}
& {\color{black}65.4}
& {\color{black}73.6}
& {\color{black}77.7}
& {\color{black}78.1}
& {\color{black}87.2}
\\
&HomoGCL$^*$
& 32.2
& 37.7
& 42.0
& 45.1
& 51.4
& 53.0
& 54.4
& 55.5
& 56.5
& {\color{black}50.8}
& {\color{black}50.8}
& {\color{black}57.2}
& {\color{black}58.8}
& {\color{black}64.9}
& {\color{black}67.6}
& {\color{black}70.9}
& {\color{black}84.1}
\\
&MA-GCL$^*$
& 45.5
& 51.0
& 54.9
& 57.4
& 62.1
& 63.5
& 64.6
& 65.6
& 66.5
&{\color{black}\underline{56.9}}
& {\color{black}56.9}
& {\color{black}64.3}
& {\color{black}69.0}
& {\color{black}74.0}
& {\color{black}76.3}
& {\color{black}78.1}
& {\color{black}85.6}
\\
\midrule
\multirow{10}{*}{\rotatebox{90}{\textbf{Hypergraph}}}
&HyperConv
& 27.2
& 30.8
& 36.6
& 43.2
& 50.0
& 55.9
& 54.2
& 62.5
& 66.1
& {\color{black}55.6}
& {\color{black}57.7}
&{\color{black}\underline{64.7}}
& {\color{black}67.4}
& {\color{black}81.3}
& {\color{black}83.1}
& {\color{black}84.6}
& {\color{black}90.1}
\\
&HGNN
& 30.2
& 40.8
& 56.7
& 60.4
& 71.6
& 75.8
& 77.1
& 77.3
& 78.6
& {\color{black}28.3}
& {\color{black}31.4}
& {\color{black}29.5}
& {\color{black}38.6}
& {\color{black}43.6}
& {\color{black}45.6}
& {\color{black}48.7}
& {\color{black}89.2}
\\
&HNHN
& 31.7
& 44.9
& 52.6
& 56.2
& 68.0
& 71.1
& 72.6
& 74.6
& 74.9
& {\color{black}43.2}
& {\color{black}47.4}
& {\color{black}51.1}
& {\color{black}65.4}
& {\color{black}74.7}
& {\color{black}78.0}
& {\color{black}79.5}
& {\color{black}88.5}
\\
&HyperSAGE
& 39.9
& 47.0
& 54.9
& 58.4
& 68.9
& 73.4
& 76.4
& 77.0
& 77.7
& {\color{black}41.5}
& {\color{black}47.1}
& {\color{black}52.8}
& {\color{black}62.8}
& {\color{black}72.5}
& {\color{black}77.8}
& {\color{black}78.2}
& {\color{black}87.0}
\\
&UniGNN
& 38.5
& 49.0
& 58.5
& 62.5
& 71.7
& 74.5
& 76.8
& 77.1
& 78.4
& {\color{black}47.8}
& {\color{black}55.5}
& {\color{black}58.7}
& {\color{black}71.6}
& {\color{black}79.1}
& {\color{black}82.2}
& {\color{black}84.4}
& {\color{black}86.5}
\\		
&AST
& 48.6
& 49.8
& 55.5
& 58.9
& 69.5
& 71.4
& 75.2
& 77.9
& 78.4
& {\color{black}40.5}
& {\color{black}44.5}
& {\color{black}52.5}
& {\color{black}66.0}
& {\color{black}75.3}
& {\color{black}75.4}
& {\color{black}78.9}
& {\color{black}83.9}
\\  
&TriCL$^*$
&\underline{53.3}
&\underline{66.6}
&\underline{69.8}
&\underline{73.5}
&\textbf{76.9}
&\underline{78.6}
&\underline{78.2}
&\underline{78.5}
&\underline{79.0}
& {\color{black}53.6}
&{\color{black}\underline{60.8}}
& {\color{black}63.9}
&{\color{black}\underline{76.4}}
&{\color{black}\underline{83.4}}
&{\color{black}\underline{86.5}}
&{\color{black}\underline{87.9}}
&{\color{black}\underline{92.5}}
\\
\cmidrule{2-19}
&CHGNN
&\textbf{64.8}
&\textbf{66.7}
&\textbf{72.2}
&\textbf{73.9}
&\textbf{76.9}
&\textbf{78.7}
&\textbf{78.5}
&\textbf{79.7}
&\textbf{79.9}
&{\color{black}\textbf{57.7}}
&{\color{black}\textbf{63.8}}
&{\color{black}\textbf{68.8}}
&{\color{black}\textbf{77.8}}
&{\color{black}\textbf{84.8}}
&{\color{black}\textbf{86.8}}
&{\color{black}\textbf{88.3}}
&{\color{black}\textbf{94.1}}
\\
  \bottomrule
  \end{tabular}
  \label{table4}
  \vspace{-6mm}
  \end{table*}}

\vspace{-3mm}
\subsection{Comparison}  
\label{comparison}
{\color{black}
Table~\ref{table2} reports the accuracy of all methods. CHGNN tops consistently across all datasets, outperforming the closest competitors by 1.3\% on CA-Cora, 0.6\% on CA-DBLP, 3.1\% on CC-Pubmed, 3.3\% on CC-Citeseer, 2.1\% on CC-Cora, 0.4\% on 20news, 0.2\% on ModelNet, 1.4\% on NTU, and 0.1\% on Mushroom.  This is because CHGNN is capable of exploiting hypergraph structural information for enhanced node representation and learning. Moreover, there are three notable observations as follows.

First, 
while Suave, a state-of-the-art non-graph based method, surpasses  MLP, SimCLR, Moco, and SimCLRv2, it is less accurate than CHGNN. This is due to CHGNN 's ability to exploit hypergraph structural information for enhanced node representation and learning. 

Next, the graph-based methods  GCN, GAT, DGI, SimGRACE, HomoGCL, and MA-GCL require the conversion of hypergraphs into graphs, which causes a loss of structural information, making them less effective.
MA-GCL achieves higher accuracy than HomoGCL on six datasets (CA-Cora, CC-Citeseer, CC-Cora, 20news, NTU, and Mushroom), due to their lower homophily in graph transformations.
HyperGNNs, however, are able to learn richer information from hypergraphs by directly incorporating structural information into the learning process, leading to more effective and accurate learning outcomes.

\noindent{\textit{\textbf{Discussion.}} Graph and hypergraph models have fundamental differences. Hypergraph models focus on classifying data based on multiple correlations and often ignore pairwise relationships~\cite{feng2019hypergraph, yadati2018hypergcn,arya2020hypersage,huang2021unignn,chien2021you,lee2022m}. In contrast, graph models concentrate on pairwise relationships and tend to disregard multiple correlations~\cite{kipf2016semi,velivckovic2017graph,velickovic2019deep,xia2022simgrace}. Thus, on datasets with predominant multiple correlations, e.g., CC-Cora, CC-Citeseer, and CC-Pubmed, graph models generally underperform.  This distinction highlights that these models are not interchangeable.

Second,  TriCL performs comparably to all supervised baselines, even without any labeled data. TriCL outperforms the state-of-the-art AST and UniGNN on CA-Cora, CC-Pubmed, CC-Citeseer, CC-Cora, NTU, and Mushroom. This indicates that CL is capable of exploiting information from unlabeled nodes. Compared with TriCL, CHGNN (i) incorporates contrastive loss in training to enhance supervised learning and (ii) provides enough variance to facilitate CL via its view generator.

Third, although AST can perform well with sufficient training data (e.g., 50\% labeled data~\cite{chien2021you}), it is not well-suited in our target application scenario with limited labeled data. In contrast, CHGNN requires relatively little labeled data to achieve high accuracy. This is due in part to its use of CL, which can exploit information from unlabeled data.

}

\vspace{-4mm}
\subsection{Effect of Label Ratios}

{\color{black}
We study the impact of splits on CC-Cora and NTU.
Table \ref{table4} shows the results.  CHGNN performs better than the competitors in all settings. On CC-Cora, (i) the performance of CHGNN trained on 3\% of the labeled nodes is comparable to that of UniGNN trained on 10\%; (ii) in sparse training data scenarios, e.g., when only 0.5\% of the nodes are labeled, CHGNN's advantage over the best competitor, TriCL,  is at 21.5\%; (iii) CHGNN  outperforms the best semi-supervised competitor, AST,  by  1.9\%, even when 50\% of the nodes are labeled. 
On NTU, (i) the accuracy of CHGNN trained on 30\% labeled nodes is comparable to that of UniGNN trained on 80\%;  (ii) when just 5\% of the nodes are labeled, CHGNN's advantage against the best competitor, HyperConv,  is at 4.1\%.
 We attribute this to the CL component of CHGNN, which mines topological and feature information through constructed positive and negative samples and enables it to learn better from unlabeled data. Results on the remaining seven datasets exhibit similar patterns. 
 
Next, the classification accuracy of the semi-supervised methods increases linearly with the label ratio, while the performance of the CL methods stabilizes when the label ratio exceeds 20\%.
This is because CL methods ignore supervised information during training. The node embedding distribution generated by CL does not match the distribution of the labels well. Thus, it is difficult for the downstream classifier to accurately identify class boundaries based on the training set. In addition, CHGNN outperforms the semi-supervised competitors, due to its use of a contrastive loss for embedding learning.
}

\vspace{-0.4cm}
{\color{black}\subsection{Effect of Imbalance Labels}
We compare CHGNN with the state-of-the-art hypergraph neural network models UniGNN~\cite{huang2021unignn}, AST~\cite{chien2021you}, and TriCL~\cite{lee2022m} on datasets CC-Cora and CC-Citeseer with imbalanced labels.

Following the imbalanced setting in a previous study~\cite{juan2023ins},  we randomly select (approximately) half of the classes as the minority classes and the rest as the majority classes. For the seven classes of CC-Cora, classes 4, 5, and 6 are the minority classes, while classes 1, 2, 3, and 7 are the majority classes. For the six classes of CC-Citeseer, classes 1, 2, and 6 are the minority classes, and classes 3, 4, and 5 are the majority classes. We then randomly sample 10 nodes from each minority class and 20 nodes from the majority classes to form the training set. 

Fig.~\ref{fig_imbalance} reports the classification accuracy of the four methods for each class. For the minority classes, CHGNN and TriCL, which use contrastive loss, exhibit commendable performance. In contrast,  their effectiveness is lower for the majority classes compared to AST and UniGNN, which do not employ contrastive loss. This indicates that contrastive loss benefits minority classes more than it benefits majority classes. For the majority classes, the combination of contrastive loss and classification loss biases the node distribution of CHGNN, leading to reduced effectiveness. 
In short, CHGNN slightly suffers on imbalanced datasets where majority classes dominate.

Note that datasets with highly skewed labels are rare in real-world scenarios~\cite{yadati2018hypergcn, huang2021unignn}. This implies that CHGNN exhibits performance superiority in most cases (see Section~\ref{comparison}). Given the limitation on imbalanced data, we plan to investigate the impact of jointly training contrastive learning losses and semi-supervised learning losses on node embedding distribution. We leave this for future work.
\begin{figure}
\centering
         \subfloat[ {CC-Cora}]{
          \includegraphics[width=3.6 cm]{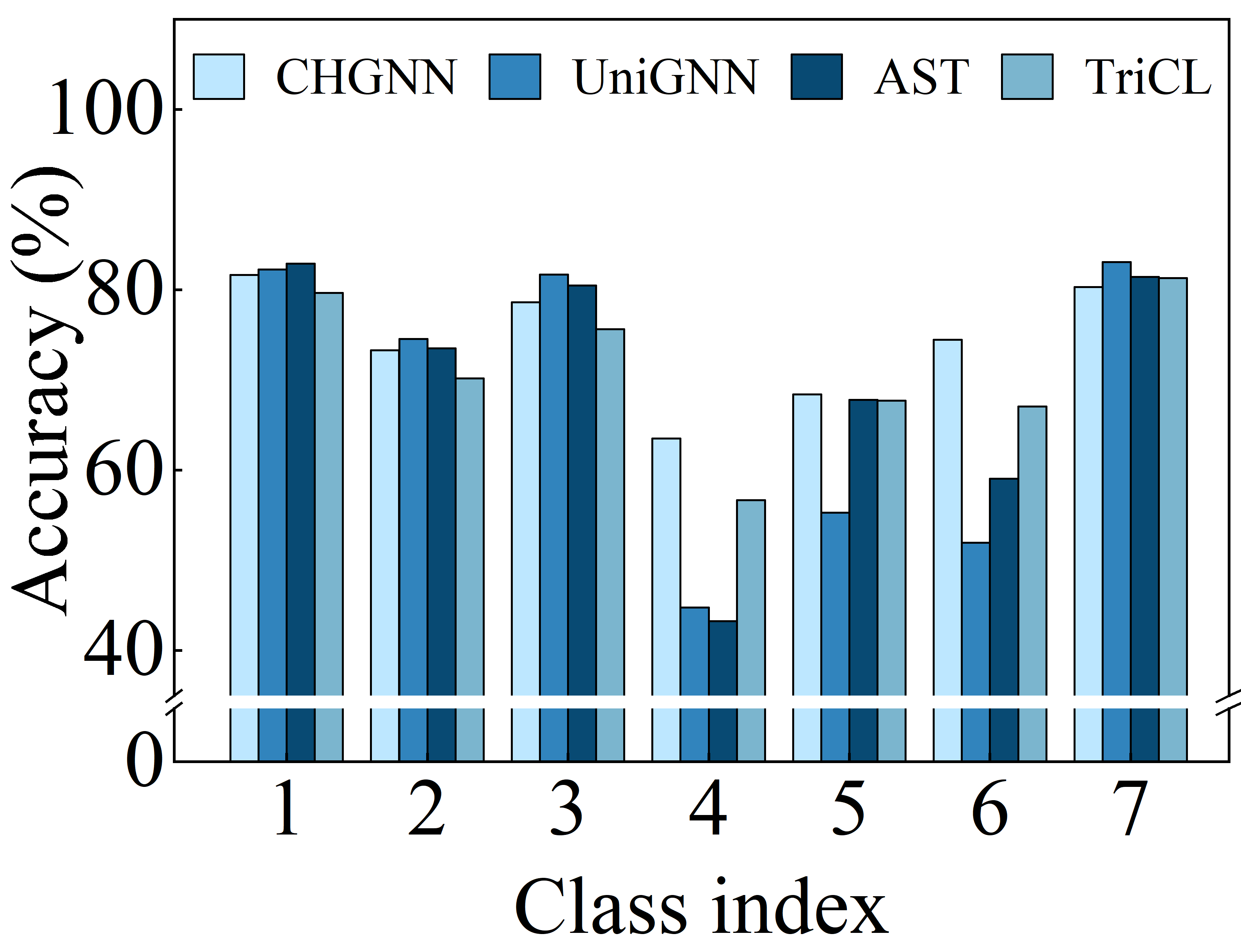}}   
           \quad\quad\quad
        \subfloat[ {CC-Citeseer}]{
          \includegraphics[width=3.6 cm]{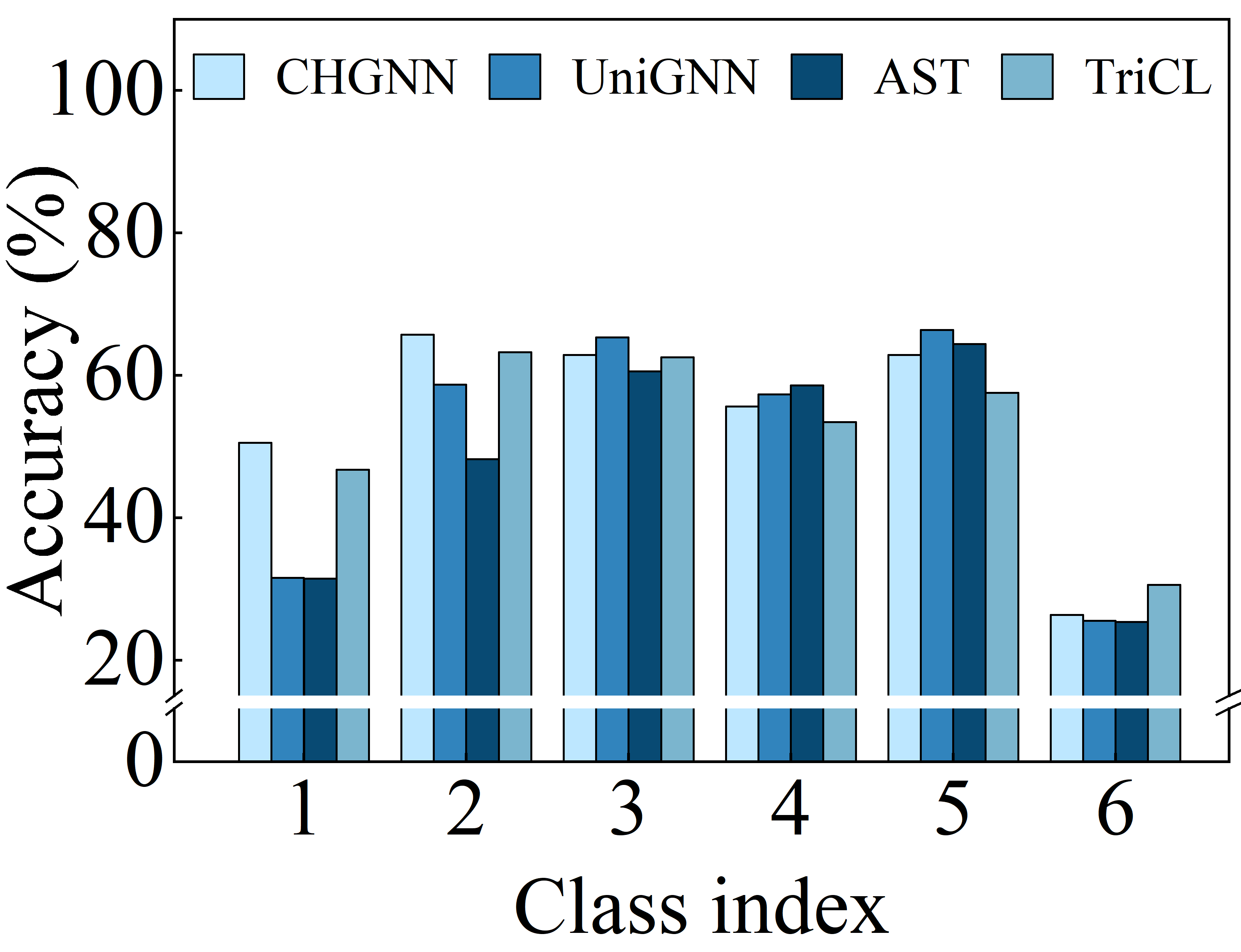}}

           \caption{The impact of imbalanced labels.}
     \label{fig_imbalance}
     \vspace{-5mm}
   \end{figure}
  }

\begin{table*}[t]
  \renewcommand\tabcolsep{8pt}
  \centering
  \caption{\color{black}Ablation study results, where (i) values after $\pm$ are standard deviations; (ii) the best performances are marked in bold; (iii) $\mathcal{L}_{h}$, $\mathcal{L}_{cl}$, and $\mathcal{L}_{crocl}$ represent hyperedge homogeneity loss, basic contrastive loss, and cross-validation contrastive loss, respectively; (iv) ``Aug'' indicates the strategy of augmenting the views, which can be no CL (``-''), random augmentation (RandAug), and adaptive hypergraph view generator (ViewGen); (v) ``Encoder'' indicates the model used for encoding hypergraphs, which can be HyperGNN and H-HyperGNN; and (vi) $V_i\,(i=A,\cdots,P)$ denotes variants.}
  \begin{tabular}{c|c|c|c|c|c|ccccc}
  \toprule

 &
 $\mathcal{L}_{h}$
 &$\mathcal{L}_{cl}$
 &$\mathcal{L}_{crocl}$
 &\textbf{Aug}
 &\textbf{Encoder}                    
 &\textbf{CA-Cora}  
 &\textbf{CA-DBLP} 
 &\textbf{CC-Pubmed}    
 &\textbf{CC-Citeseer} 
 &\textbf{CC-Cora}  \\
 
 \midrule

$V_A$
 &
- 
&- 
&- 
&- 
&HyperGNN        
& 74.0  $\pm$  0.4
& 87.9  $\pm$  0.1
& 75.1  $\pm$  0.8
& 62.0  $\pm$  0.8
& 66.8  $\pm$  1.5 \\

$V_B$
&
\checkmark 
&- 
&- 
&- 
&HyperGNN                            
& 73.1  $\pm$  0.7
& 87.1  $\pm$  0.1
& 74.9  $\pm$  0.3
& 60.3  $\pm$  0.7
& 64.8  $\pm$  1.3  \\

$V_C$
&
- 
&- 
&- 
&- 
&H-HyperGNN  
& 74.5  $\pm$  0.1
& 88.1  $\pm$  0.1
& 75.4  $\pm$  0.9
& 61.6  $\pm$  1.3
& 67.3  $\pm$  1.3\\

$V_D$
&
\checkmark 
&- 
&- 
&- 
&H-HyperGNN                            
& 74.7  $\pm$  0.1
& 88.2  $\pm$  0.1
& 75.9  $\pm$  1.1
& 62.5  $\pm$  0.2
& 67.4  $\pm$  1.2  \\

$V_E$
&
- 
&\checkmark 
&- 
&RandAug 
&HyperGNN                  
& 71.8  $\pm$  1.6
& 88.5  $\pm$  0.2
& 74.6  $\pm$  1.6
& 59.1  $\pm$  3.2
& 63.1  $\pm$  2.6  \\

$V_F$
&
- 
& - 
&\checkmark 
&RandAug 
&HyperGNN                       
& 75.5  $\pm$  0.8
& 88.7  $\pm$  0.2
& 75.8  $\pm$  3.6
& 68.2  $\pm$  1.7
& 72.0  $\pm$  1.6  \\

$V_G$
&
-  
& \checkmark 
&\checkmark 
&RandAug 
&HyperGNN             
& 74.0  $\pm$  0.9
& 88.6  $\pm$  0.2
& 75.4  $\pm$  4.0
& 67.7  $\pm$  1.1
& 70.5  $\pm$  2.2   \\

$V_H$
&
\checkmark 
& \checkmark 
&- 
&RandAug 
&H-HyperGNN            
& 70.3  $\pm$  2.3
& 88.6  $\pm$  0.2
& 74.1  $\pm$  1.7
& 58.9  $\pm$  1.9
& 63.7  $\pm$  1.3 \\

$V_I$
&
\checkmark 
& - 
&\checkmark 
&RandAug 
&H-HyperGNN            
& 75.3  $\pm$  1.2
& 88.8  $\pm$  0.2
& 76.7  $\pm$  3.4
& 67.5  $\pm$  1.5
& 71.5  $\pm$  0.6  \\

$V_J$
&
\checkmark 
&\checkmark 
&\checkmark 
&RandAug 
&H-HyperGNN    
& 74.5  $\pm$  1.9
& 88.5  $\pm$  0.3
& 76.1  $\pm$  2.6
& 67.0  $\pm$  2.3
& 69.9  $\pm$  3.8 \\

$V_K$
&
- 
&\checkmark 
&- 
& ViewGen 
&HyperGNN           
& 75.7  $\pm$  1.0
& 89.2  $\pm$  0.1
& 75.5  $\pm$  1.5
& 62.6  $\pm$  1.5
& 66.3  $\pm$  2.6   \\

$V_L$
&
- 
& - 
&\checkmark 
&ViewGen 
&HyperGNN                       
& 76.5  $\pm$  1.3
& 89.1  $\pm$  0.2
& 77.9  $\pm$  1.7
& 67.6  $\pm$  1.8
& 73.1  $\pm$  0.9   \\

$V_M$
&
-  
& \checkmark 
&\checkmark 
&ViewGen 
&HyperGNN             
& 76.8  $\pm$  1.4
& 89.1  $\pm$  0.1
& 77.7  $\pm$  1.8
& 67.9  $\pm$  1.2
& 72.6  $\pm$  1.3 \\

$V_N$
&
\checkmark 
& \checkmark 
&- 
&ViewGen 
&H-HyperGNN            
& 78.1  $\pm$  1.2
& 89.1  $\pm$  0.2
& 75.3  $\pm$  1.3
& 62.1  $\pm$  1.2
& 71.0  $\pm$  2.4  \\
      
$V_O$
&     
\checkmark 
& - 
&\checkmark 
&ViewGen 
&H-HyperGNN            
& 77.5  $\pm$  1.0
& 89.1  $\pm$  0.2
& 76.4  $\pm$  1.0
& 68.0  $\pm$  1.8
& 74.3  $\pm$  0.7  \\

$V_P$
&
\checkmark 
&\checkmark 
&\checkmark 
&ViewGen 
&H-HyperGNN    
& \textbf{78.3 $\pm$ 1.0 }     
& \textbf{89.3 $\pm$ 0.2 }   
& \textbf{78.3 $\pm$ 2.9 }     
& \textbf{68.5 $\pm$ 1.7 }  
& \textbf{75.5 $\pm$ 0.9 } \\

      \bottomrule
      \end{tabular}
      \vspace{-5mm}
      \label{table_ablation}
      \end{table*}

\begin{figure}[t]
  \centering 
  \vspace{-3mm}
      \subfloat[ {\color{black}$\lambda _{c}$ and $\lambda _{e}$ (CC-Cora)}]{
         \includegraphics[width=4.1 cm]{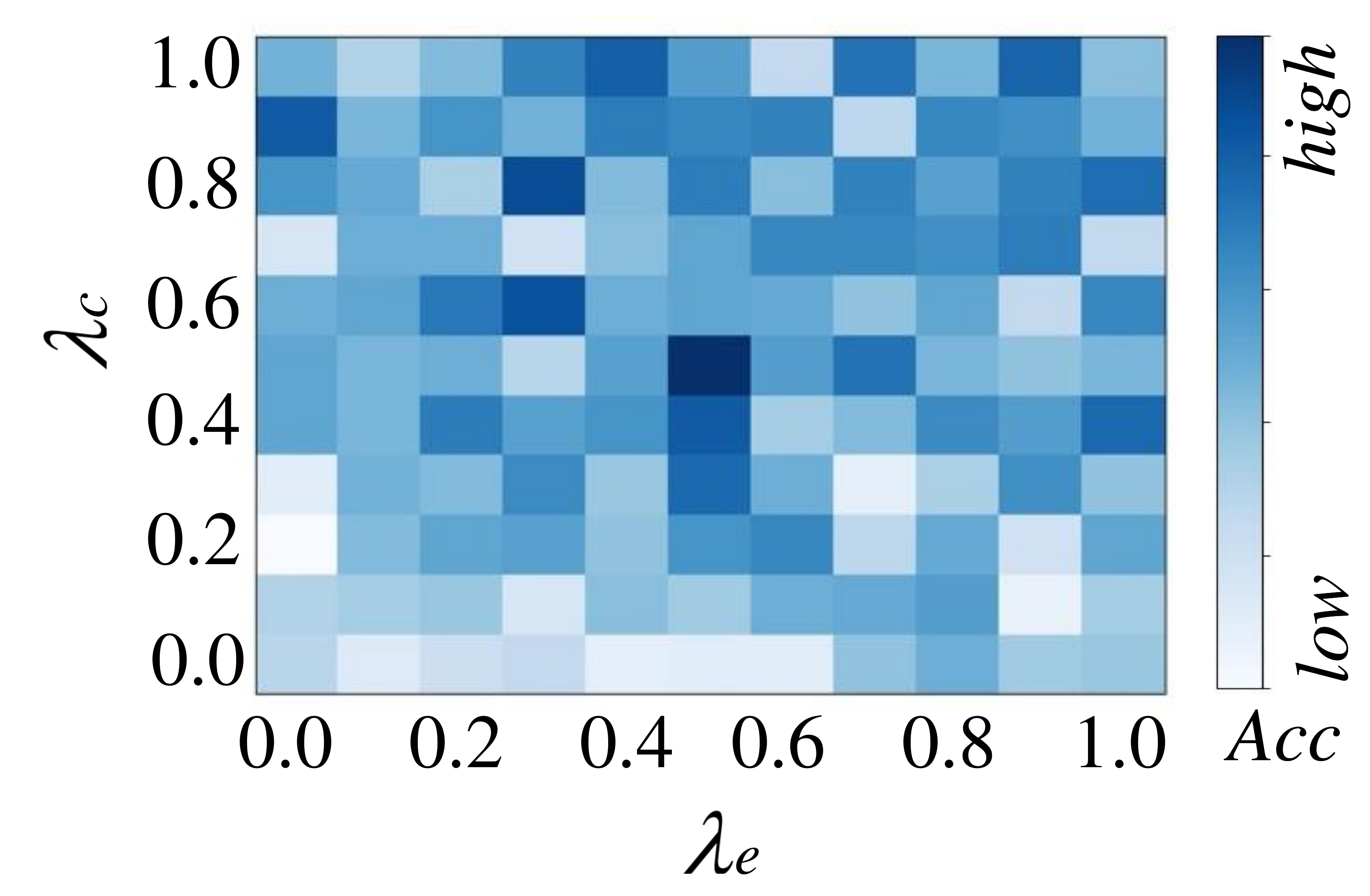}} 
         \quad
          \subfloat[ {\color{black}$\varepsilon_c$ and $\varepsilon_e$ (CC-Cora)} ]{
         \includegraphics[width=4.1 cm]{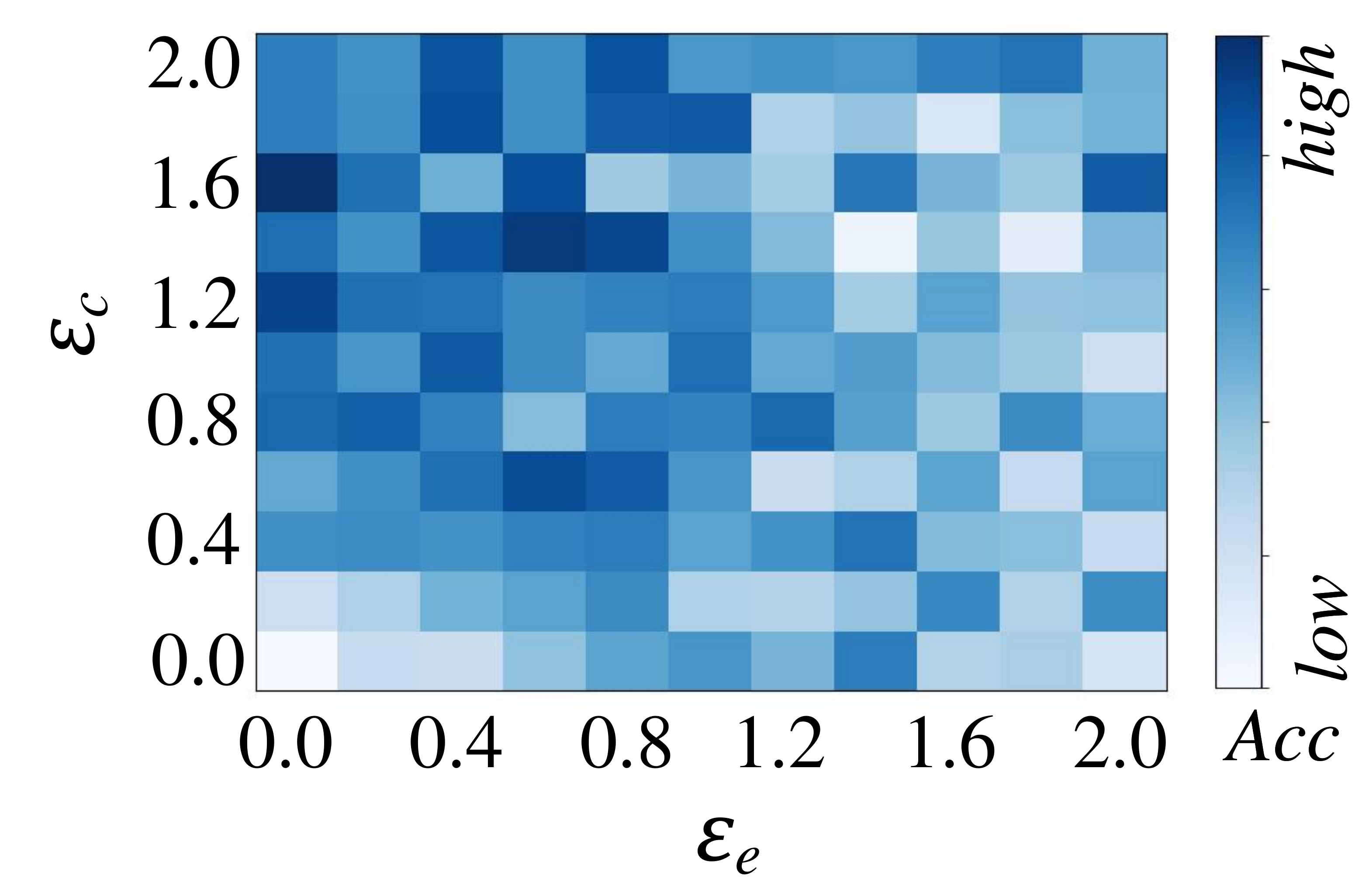}}

          \vspace{-3mm}
        \subfloat[Impact of  $\lambda _{h}$]{
         \includegraphics[width=3.8 cm]{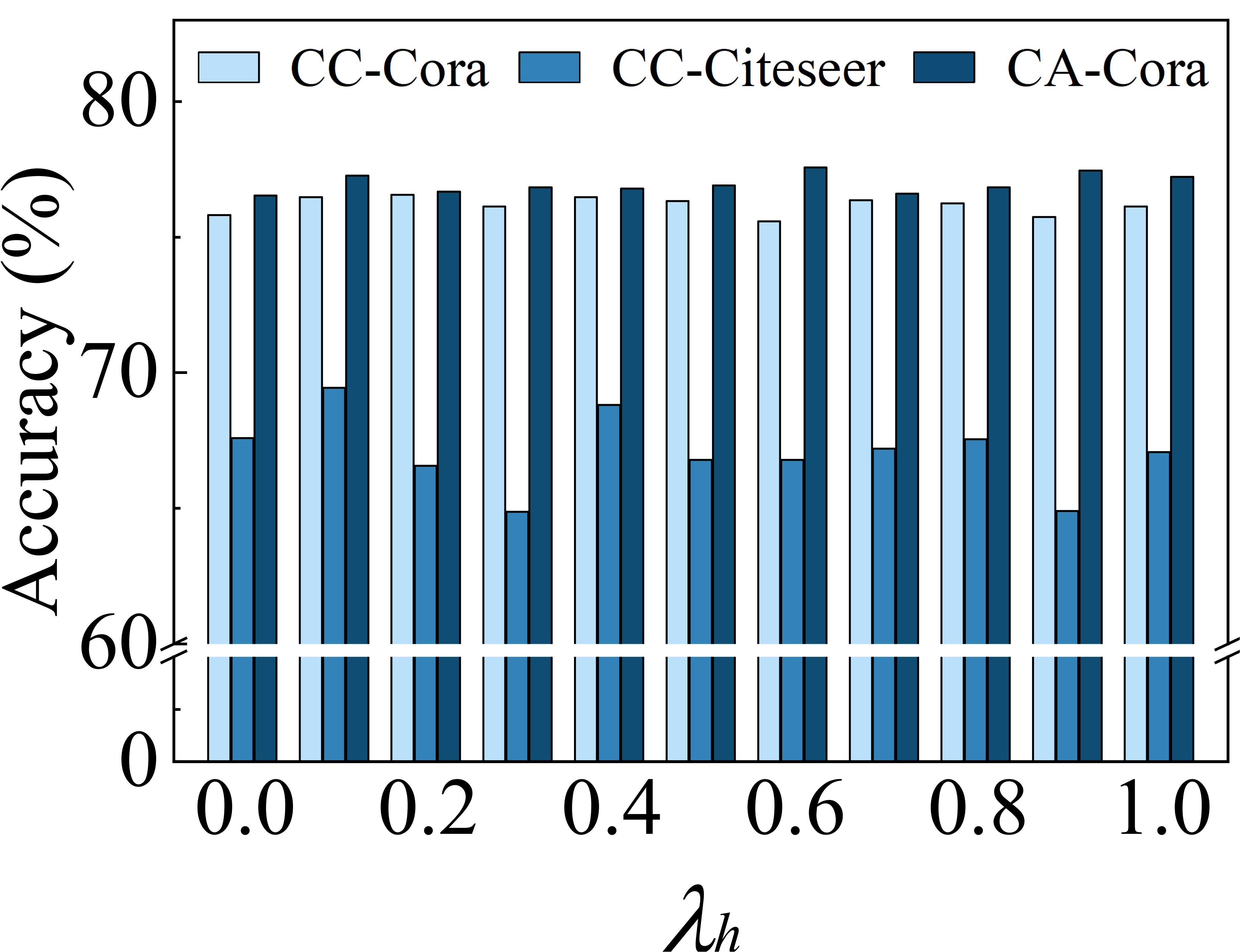}} 
         \quad \quad 
         \subfloat[ Impact of  $\lambda _{nc}$ ]{
          \includegraphics[width=3.8 cm]{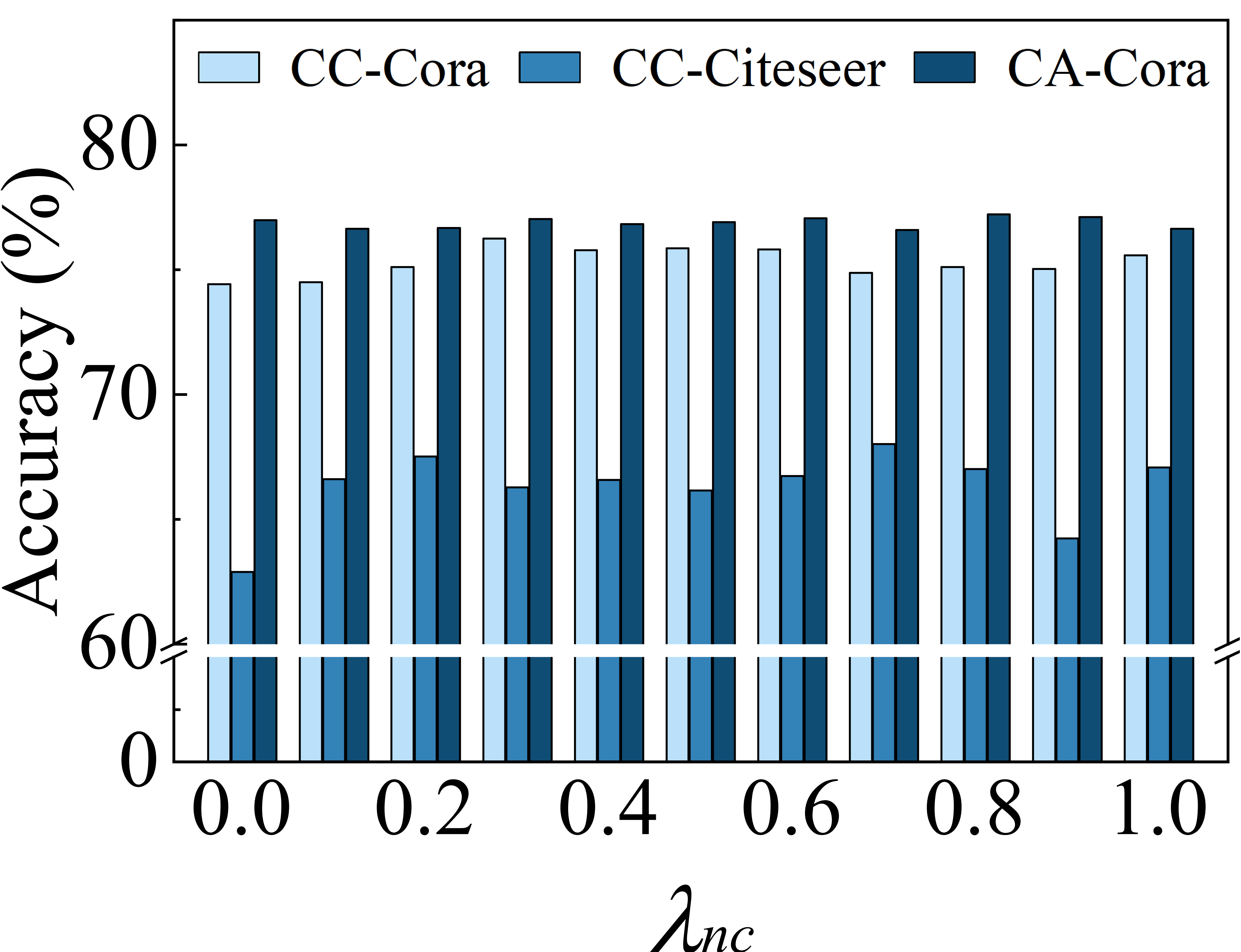}}  
          
         \vspace{-3mm}
          \subfloat[ Impact of $\lambda _{ne}$]{
         \includegraphics[width=3.8cm]{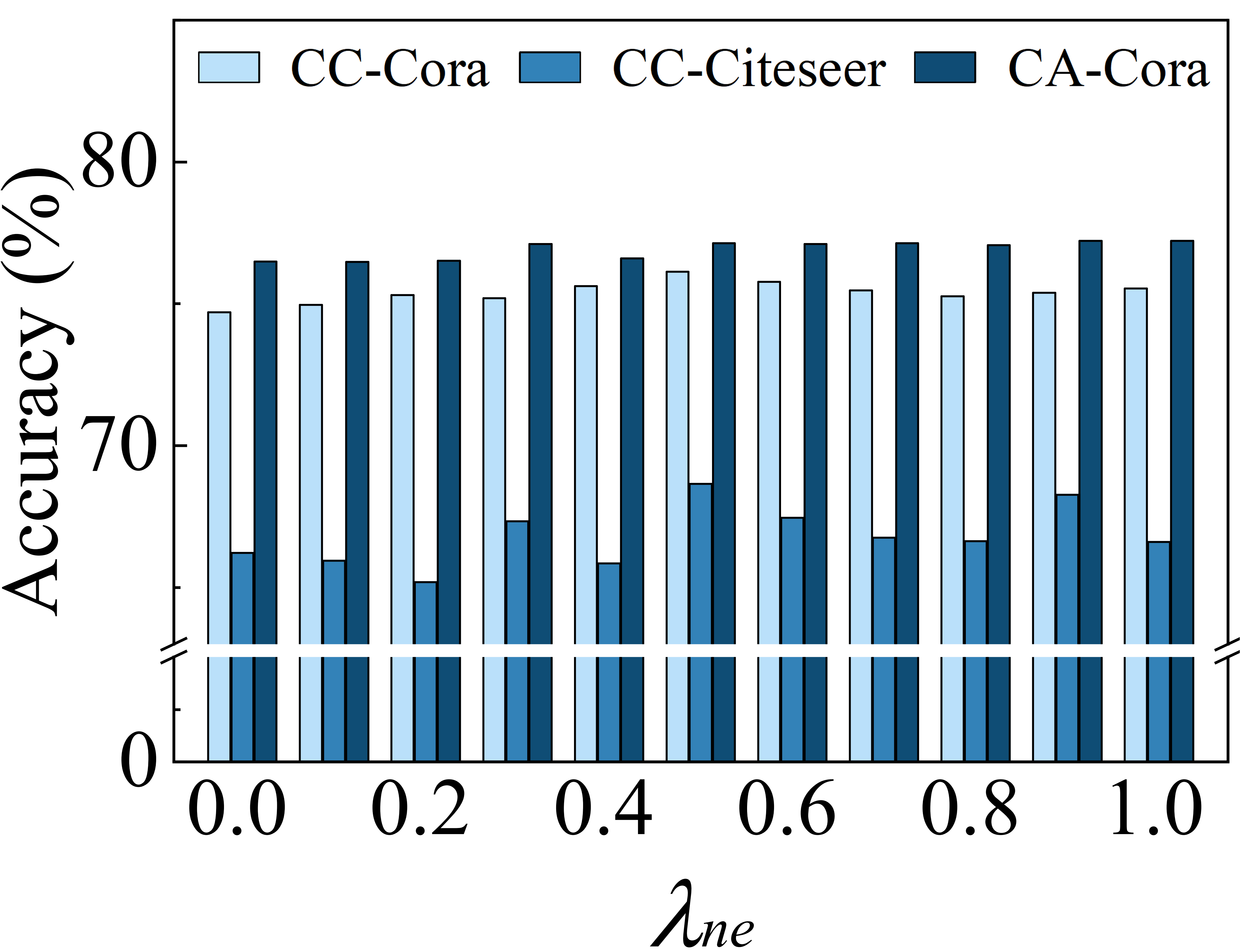}} 
         \quad\quad 
         \subfloat[ Impact of  $\lambda _{ec}$]{
          \includegraphics[width=3.8cm]{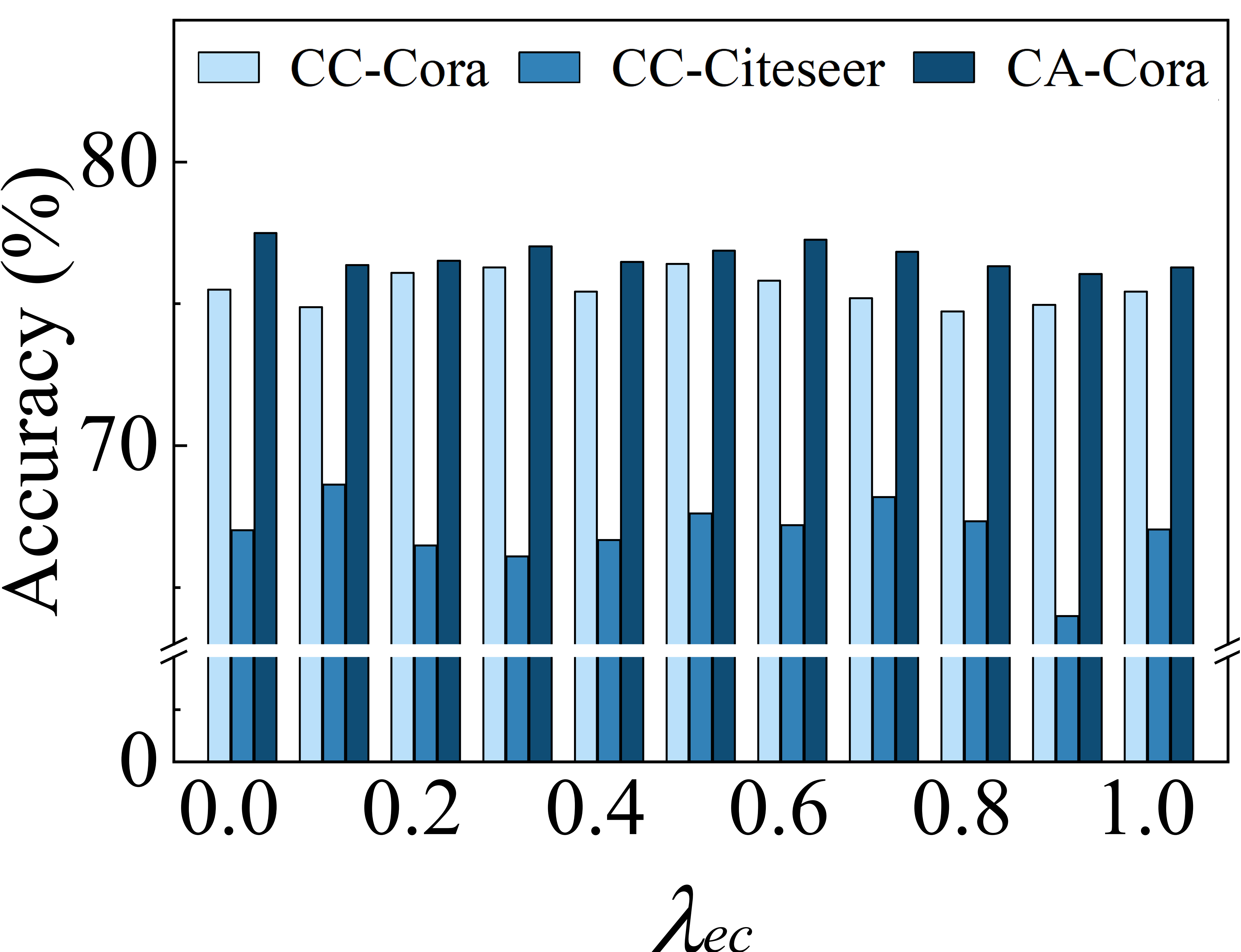}} 
   
     \caption{ \color{black}Impact of hyperparameters, where lighter colors in heatmaps indicate lower accuracy. }
     \label{fig_hp} 
\vspace{-8mm}
    \end{figure}

\vspace{-4mm}
\subsection{Parameter Analysis}
\textit{\textbf{Effect of $\lambda _{c}$ and $\lambda _{e}$.}}
We evaluate the impact of $\lambda _{c}$ and $\lambda _{e}$ on classification, where $\lambda _{c}$ and $\lambda _{e}$ control the basic contrastive losses (see Section~\ref{section:4.4.2}). Figs. \ref{fig_hp} (a) is  the heatmap showing the classification accuracy on CC-Cora,  where both $\lambda _{c}$ and $\lambda _{e}$ are varied from 0 to 1. 
CHGNN achieves the highest accuracy when $0.3<\lambda_{c}<0.5$ and $\lambda_{e}=0.5$. This is because two basic contrastive losses interact with each other, which necessitates the adjustments of $\lambda _{c}$ and $\lambda _{e}$ jointly for optimal performance.

\noindent
\textit{\textbf{Effect of $\varepsilon_{c}$ and  $\varepsilon_{e}$.}} We evaluate the impact of $\varepsilon_{c}$ and $\varepsilon_{e}$ on classification, where $\varepsilon _{c}$ and $\varepsilon _{e}$ determine the timing of the adaptive adjustment of repulsion (see Section ~\ref{section:4.5.2}). Figs. \ref{fig_hp} (b) report the results on CC-Cora with both $\varepsilon_{c}$ and $\varepsilon_{e}$ being varied from 0 to 2. 
When $\varepsilon_{c}$ and $\varepsilon_{e}$ are set to 0, the model pushes similar nodes apart, which can result in lower classification performance. When $\varepsilon_{c}$ and $\varepsilon_{e}$  are both larger than 1, the training performance of CHGNN degrades as there is almost no adjusted repulsion. The best performance is achieved for  $0.6<\varepsilon _{c}<1.6$ and $0.2<\varepsilon _{e}<0.6$. This is because there are significantly fewer clusters than hyperedges that contain higher-order relationships in CC-Cora. Only when the similarity of the embeddings of the constructed hyperedges is sufficiently small,  we can ensure that the model can correctly adjust the adaptive repulsion by similarity.

\noindent\textit{\textbf{Effect of $\lambda _{h}$, $\lambda _{nc}$, $\lambda _{ne}$, and $\lambda _{ec}$.}} We evaluate the impact of $\lambda _{h}$, $\lambda _{nc}$, $\lambda _{ne}$, and $\lambda _{ec}$ on classification, where $\lambda _{h}$ controls the hyperedge homogeneity loss (see Section~\ref{section:4.4.1}) and $\lambda _{nc}$, $\lambda _{ne}$, and $\lambda _{ec}$ control the cross-validation contrastive losses (see Section~\ref{section:4.4.2}). Figs. \ref{fig_hp} (c)--(f) report the results on CC-Cora, CC-Citeseer, and CA-Cora, respectively, with $\lambda _{h}$, $\lambda _{nc}$, $\lambda _{ne}$, and $\lambda _{ec}$ being varied from 0 to 1. 
For  $\lambda _{h}$, CHGNN performs the best when $0<\lambda _{h}<0.2$. Although the loss avoids nodes aggregating too much heterogeneous information, the distributions of hyperedge homogeneity and categories are not exactly the same. 
For  $\lambda _{nc}$, CHGNN underperforms when $\lambda_{nc} = 0$. In particular, cross-validation between nodes and clusters allows nodes to learn clustering information while maintaining their own features. CHGNN performs the best when $0.1<\lambda _{nc}<0.7$. This is because an excessive weight can weaken the difference between the node and cluster embeddings and disturb the embedding distribution of the nodes or clusters.
For  $\lambda _{ne}$, we see that when $\lambda _{ne}< 0.5$, the node embedding ignores some of the higher-order information in the hyperedges, resulting in lower accuracy. When $0.5<\lambda _{ne}< 1$, CHGNN corrects the node embedding distribution using cross-validation between nodes and hyperedges, thus 
improving the performance.
For  $\lambda _{ec}$, CHGNN performs the best when $0<\lambda _{ec}<0.2$, which is attributed to the difference between the distribution of node cluster embeddings and the hyperedge embeddings caused by the hyperedge homogeneity.

\vspace{-3mm}
\subsection{Ablation Study}
\label{section:5.5}

{\color{black}Table~\ref{table_ablation}  shows the results. RandAug generates views by randomly removing 20\% of the hyperedges~\cite{lee2022m}. ViewGen is the proposed adaptive hypergraph view generator embedded in CHGNN. HyperGNN is a common hypergraph encoder without  homogeneity encoding{~\cite{feng2019hypergraph}}. H-HyperGNN is the hyperedge homogeneity-aware HyperGNN encoding model.

The three losses, $\mathcal{L}_{h}$, $\mathcal{L}_{cl}$, and $\mathcal{L}_{crocl}$, the hypergraph encoder H-HyperGNN, and the view generator ViewGen are key to the performance of CHGNN. First, the proposed $\mathcal{L}_{h}$ is not suitable for HyperGNN, as the accuracy of $V_B$ drops by 0.2\%--2\% over $V_A$. Specifically, HyperGNN does not consider homogeneity during aggregation, making it impossible to fit the distribution of the hyperedge homogeneity. In contrast, $\mathcal{L}_{h}$ is suitable for H-HyperGNN. Thus, the accuracy of $V_D$ is increased by 0.1\%--0.9\% over  $V_C$. This is because H-HyperGNN adjusts aggregation weights based on hyperedge homogeneity.

Next, ViewGen outperforms RandAug, as indicated by $V_k\,(k= E, \cdots, J)$ consistently achieving better results than $V_j\,(j= K, \cdots, P)$.  The reason is that the view generator adaptively preserves important hyperedges and nodes, while random augmentation fails to ensure feature consistency between the view and the original hypergraph. 

Finally, when generating views by RandAug, adopting $\mathcal{L}_{crocl}$ yields better performance than when adopting the joint contrastive loss (i.e., $\mathcal{L}_{cl}$ + $\mathcal{L}_{crocl}$). Thus, the accuracy of $V_F$ increases by  0.1\%--1.5\% over $V_G$, and the accuracy of $V_I$ increases by  0.3\%--1.6\% over $V_J$. This is because random sampling misses critical information, which hurts the node clustering performance of nodes. Thus, nodes from different classes are grouped into a cluster, which reduces the training performance of the cluster-level contrastive loss. The same applies to the hyperedge-level contrastive loss. However, the cross-validation contrastive loss is capable of working properly because it compares the same nodes with the clusters or hyperedges they belong to.
}

\vspace{-3mm}
\section{Conclusion}\label{sec:conclusion}

We present CHGNN, a  hypergraph neural network model that utilizes CL to learn from labeled and unlabeled data. CHGNN generates hypergraph views using adaptive hypergraph view generators that assign  augmentation operations to each hyperedge and performs hypergraph convolution to capture both structural and attribute information of nodes.
CHGNN takes advantage of both labeled and unlabeled data during training by incorporating a contrastive loss and a semi-supervised loss. Moreover, CHGNN enhances the training by adaptively adjusting temperature parameters of the contrastive loss functions. This enables learning  of node embeddings that better reflect their class memberships.  Experiments on nine real-world datasets confirm the effectiveness of CHGNN---it outperforms nineteen state-of-the-art semi-supervised learning and
contrastive learning models in terms of classification accuracy. Overall, CHGNN's performance is particularly robust in scenarios with limited labeled data, making it applicable in many real-world settings. In future research, it is of interest to explore CHGNN on other downstream tasks, e.g., hyperedge prediction, and to extend CHGNN to heterogeneous hypergraphs.

\vspace{-2mm}
\section*{Acknowledgment}
This work is supported by the National Natural Science Foundation of China (62072083, U23B2019) and the Fundamental Research Funds of the Central Universities (N2216017).  

\vspace{-2mm}
\bibliographystyle{IEEEtran}
\bibliography{CHGNN}

\vspace{-15mm}
\begin{IEEEbiography}
[{\includegraphics[width=1in,height=1.25in,clip,keepaspectratio]{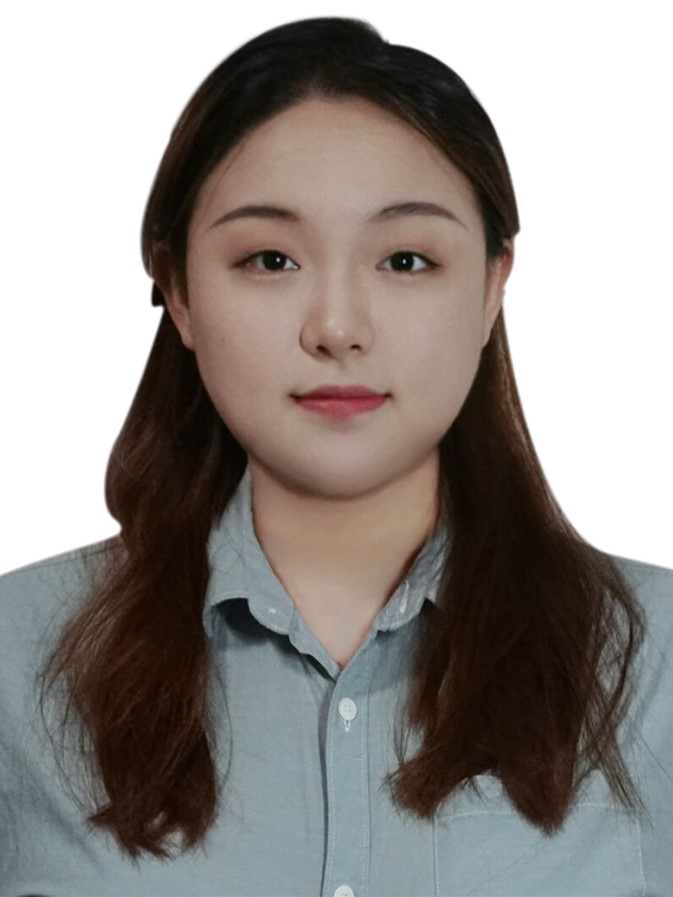}}]
{Yumeng Song} received the M.S. degree in computer software and theory from Northeastern University,
China, in 2019.  She is currently working toward her Ph.D. degree in the School of Computer Science and Engineering of Northeastern University. Her current
research interests include graph neural networks.
\end{IEEEbiography}

\vspace{-0.4cm}
\begin{IEEEbiography}
[{\includegraphics[width=1in,height=1.25in,clip,keepaspectratio]{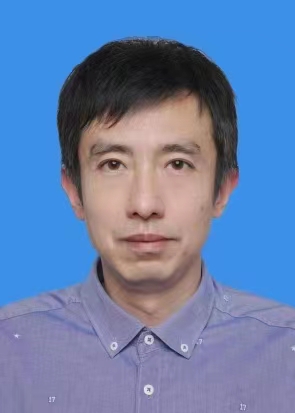}}]
{Yu Gu}
received the Ph.D. degree in computer software and theory from Northeastern University,
China, in 2010. He is currently a professor at
Northeastern University, China. His current
research interests include big data processing,
spatial data management, and graph data management. He is a senior member of China Computer Federation (CCF).
\end{IEEEbiography}

\vspace{-0.4cm}
\begin{IEEEbiography}
[{\includegraphics[width=1in,height=1.25in,clip,keepaspectratio]{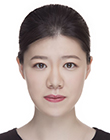}}]
{Tianyi Li} received the Ph.D. degree from Aalborg University, Denmark, in 2022. She is currently an assistant professor with the Department of Computer Science, Aalborg University. Her research concerns spatio-temporal data management and analytics, knowledge integration, and graph neural network. She is a member of IEEE.
\end{IEEEbiography}

\vspace{-0.4cm}
\begin{IEEEbiography}
[{\includegraphics[width=1in,height=1.25in,clip,keepaspectratio]{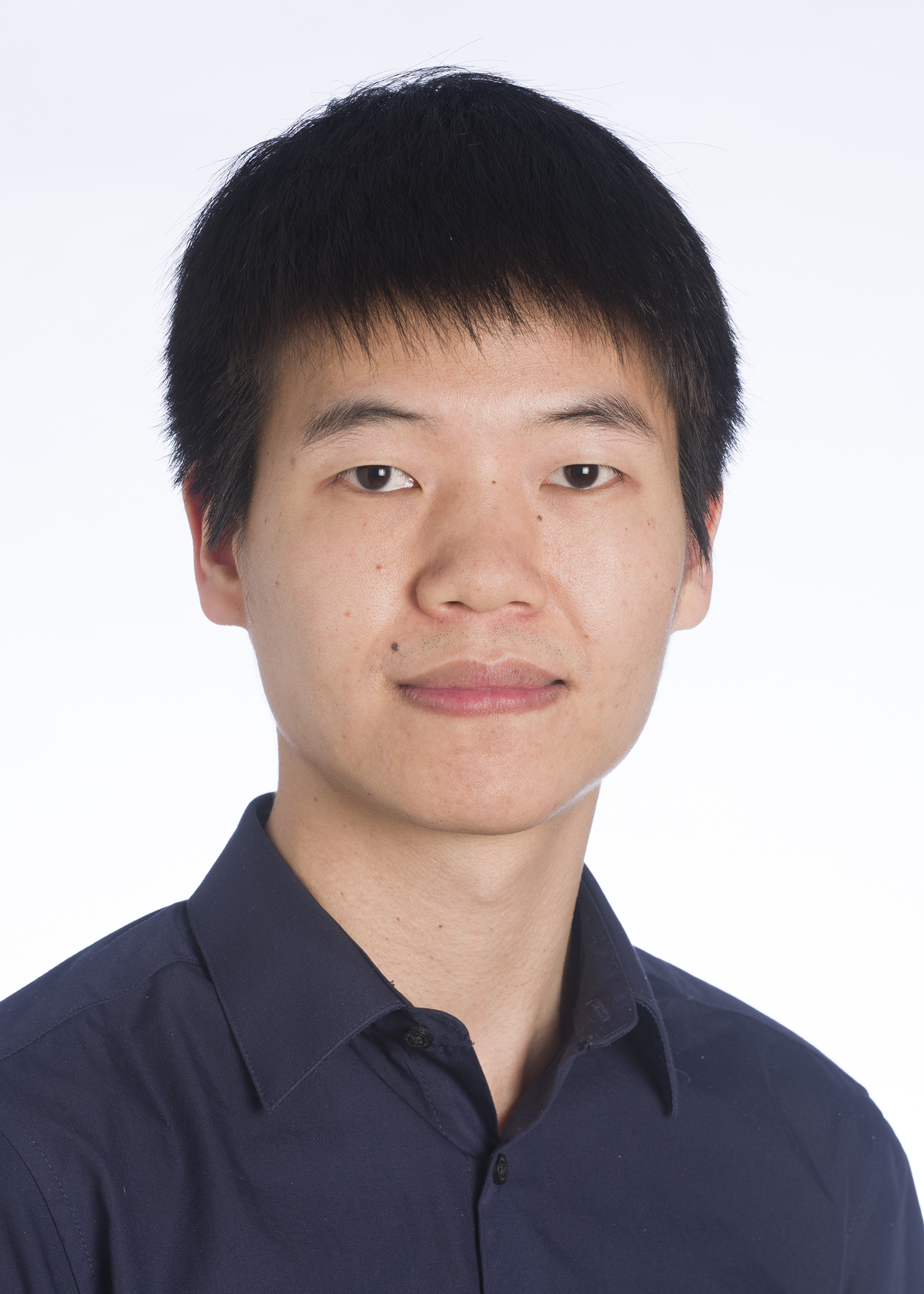}}]
{Jianzhong Qi}
received the Ph.D. degree from the University of Melbourne, in 2014. He is a lecturer with the School of Computing and Information Systems, University of Melbourne. He has been
an intern with the Toshiba China R\&D Center
and Microsoft, Redmond, WA, in 2009 and 2013,
respectively. His research interests include spatio-temporal databases, location-based social networks, and information extraction.
\end{IEEEbiography}

\vspace{-0.4cm}
\begin{IEEEbiography}
[{\includegraphics[width=1in,height=1.25in,clip,keepaspectratio]{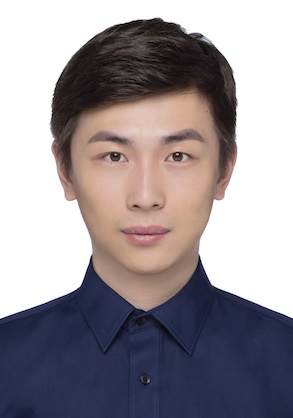}}]
{Zhenghao Liu}
received the Ph.D. degree in computer software and theory from Tsinghua University, 
China, in 2021. He is currently an associate professor at
Northeastern University, China. His current
research interests include artificial intelligence, information retrieval in natural language processing, and automated question and answer.
\end{IEEEbiography}

\vspace{-0.4cm}
\begin{IEEEbiography}
[{\includegraphics[width=1in,height=1.25in,clip,keepaspectratio]{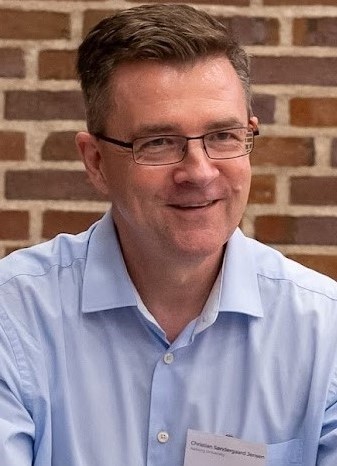}}]{Christian S. Jensen}
received the Ph.D. degree from Aalborg University, Denmark, where he is  a professor. His research concerns data analytics and management with a focus on temporal and spatiotemporal data. He is a fellow of the ACM and IEEE, and he is a member of the Academia Europaea, the Royal Danish Academy of Sciences and Letters, and the Danish Academy of Technical Sciences.
\end{IEEEbiography}

\vspace{-0.4cm}
\begin{IEEEbiography}
[{\includegraphics[width=1in,height=1.25in,clip,keepaspectratio]{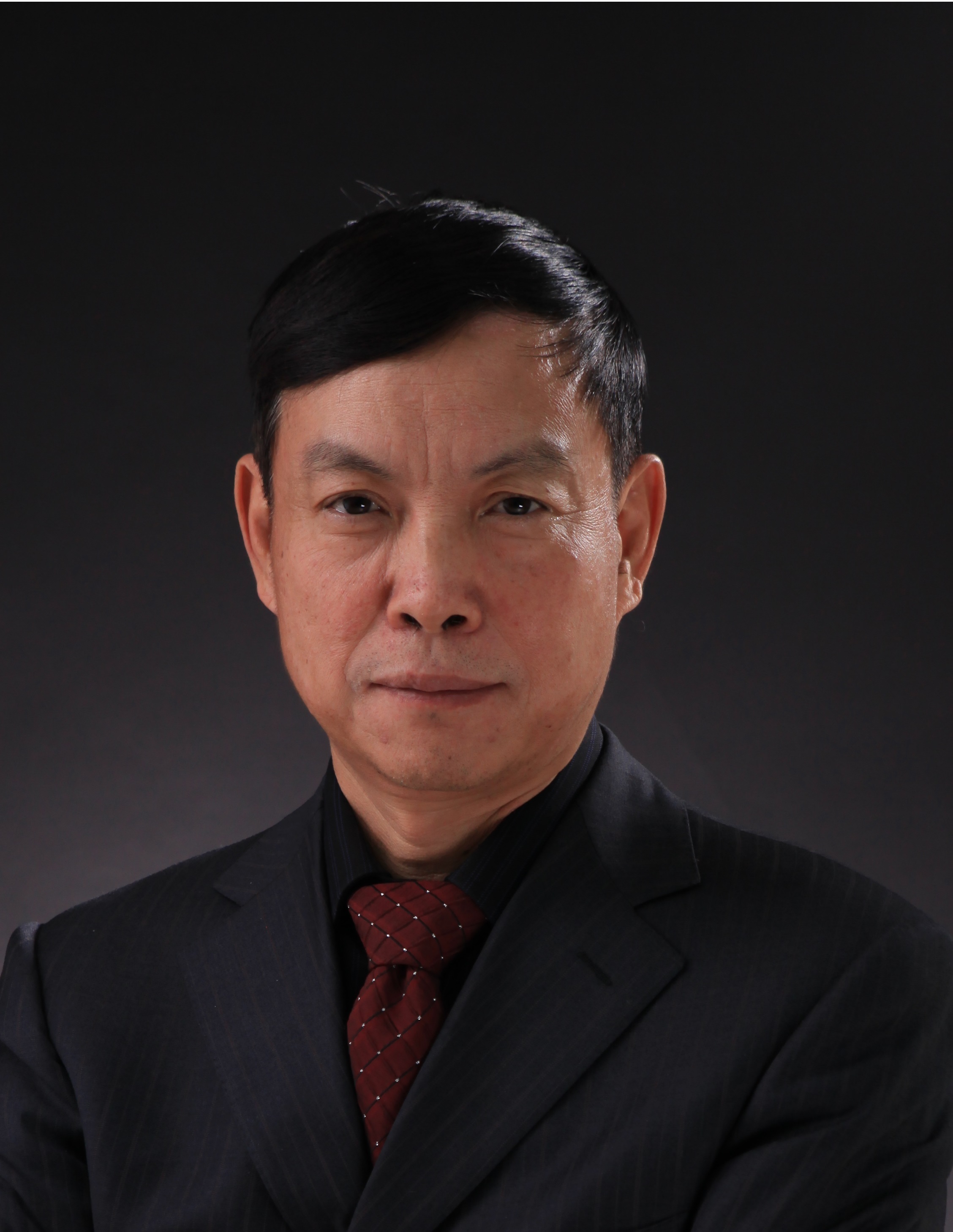}}]{Ge Yu}
received the Ph.D. degree in computer science from the Kyushu University of Japan, in 1996. He is currently a professor at the Northeastern University of China. His research
interests include distributed and parallel database, data integration, and 
graph data management.  He is a fellow of CCF and a member of
the IEEE and ACM.
\end{IEEEbiography}

%

\end{document}